\newtheorem{theorem}{Theorem}
\newtheorem{lemma}{Lemma}
\newtheorem{assumption}{Assumption}
\def\bs{\beta _*}
\def\gs{\gamma _*}
\def\ts{\theta _*}
\def\Rp{\mathbb{R}^p}
\def\R{\mathbb{R}}
\def\Ex{\mathbb E}
\def\hth{\hat{\theta}}
\begin{document}

\begin{frontmatter}



\title{Joint empirical risk minimization for instance-dependent positive-unlabeled data}

\author[label1]{Wojciech Rejchel}
\author[label2,label3]{Paweł Teisseyre \corref{cor1}}
\author[label2,label3]{Jan Mielniczuk}
\cortext[cor1]{Corresponding author: Paweł Teisseyre (teisseyrep@ipipan.waw.pl)}
\affiliation[label1]{organization={Nicolaus Copernicus University},
             city={Toruń},
             country={Poland}
            }

\affiliation[label2]{organization={Polish Academy of Sciences},
            city={Warsaw},
            country={Poland}}
\affiliation[label3]{organization={Warsaw University of Technology},
            city={Warsaw},
            country={Poland}}

\author{}

\begin{abstract}
Learning from positive and unlabeled data (PU learning) is actively researched machine learning task. 
The goal is to train a binary classification model based on a training dataset containing    part of  positives which  are labeled, and unlabeled instances. Unlabeled set includes  remaining part of positives   and all negative observations. 
An important element in PU learning is modeling  of the labeling mechanism, i.e. labels' assignment  to  positive observations.
 Unlike in many prior works, we consider a realistic setting for which probability of  label assignment, i.e. propensity score, is instance-dependent.
 In our approach we investigate minimizer of an empirical counterpart of a joint risk which depends  on both posterior probability of inclusion in a positive class as well as on a propensity score. The non-convex empirical risk is  alternately optimised with respect to parameters of both functions. 
 In the theoretical analysis we 
 establish risk consistency of the minimisers using
  recently derived methods from the theory of empirical processes.
  Besides,
 the important development here is a proposed novel implementation of an optimisation algorithm, for  which   sequential approximation of  a set of positive observations  among unlabeled ones is crucial. This relies on modified technique of 'spies' as well as on  a thresholding rule based on conditional probabilities.
Experiments conducted on 20 data sets for various labeling scenarios show that the proposed method works on par or more effectively than state-of-the-art  methods based on propensity function estimation.
\end{abstract}



\begin{keyword}
positive-unlabeled learning \sep
propensity score estimation \sep
empirical risk minimization \sep
selected at random assumption\sep
risk consistency



\end{keyword}

\end{frontmatter}


\section{Introduction}
\subsection{PU learning}
Positive-unlabeled (PU) learning is a machine learning task that aims to fit a binary classification model based on  data with partially assigned labels \cite{BekkerDavis2020}. In this scenario,   some instances from a positive class  retain true   labels, while for the remaining instances, no labels have been assigned (they  can belong  either to positive or negative class). PU learning can be thus viewed as a variant of semi-supervised learning \cite{Chapelle2010} when we have positive, negative and unlabeled observations at our disposal; the difference is that in PU case we do not have observations with a negative label assigned. PU data appear in many practical applications. Consider reporting certain ailments, such as migraine attacks, using dedicated mobile applications \cite{Park2016}. Some patients report headaches on days when they occur. However, patients who do not report symptoms may include those who did not experience migraine, as well as those who experienced it but failed to report. Another example is detection of illegal content on social networks. Some profiles are reported as containing illegal content (positive cases). However, profiles not reported as illegal may also contain content that violates the law, but this has not been verified. PU data appear naturally in the classification of texts and images \cite{LiLiu2003,Fung2006,Chiaroni2018}, anomaly detection \cite{Luo2018}, recommendation systems \cite{XMLC} and in many bioinformatics applications \cite{Li2021}, where we often have a small set of positive observations (e.g. confirmed drug-drug interactions) and a large number of observations not assigned to any of the classes.

A key element of PU data analysis is modeling of the labeling mechanism that describes which of the positive observations are assigned a label. This is usually done by imposing some conditions on the  probability of such action, called the propensity score function \cite{BekkerDavis2020}. The simplest approach used for PU data is to assume that this probability is constant (Selected Completely at Random assumption, in short SCAR) \cite{Elkan2001}, which significantly simplifies learning \cite{ElkanNoto2008, Ramaswamy2016, BekkerAAAI18, ICCS2020, TEISSEYRE2021} . However, this assumption is often not met in practice. For example, probability that a patient who has had a migraine attack will report it, may depend on the level of pain but also on other less obvious factors, such as age or 
 her/his handling of migraine reporting application. 
 Similarly, patients who experience certain illness-related symptoms, are more likely to be diagnosed and test positive for the disease, whereas asymptomatic patients may   remain largely undiagnosed.
Much of recent research work, described below,  has focused on a more realistic case when labeling is instance-dependent, in particular commonly adopting
a  less restrictive   assumption, called Selected At Random (SAR), according to which the probability of labeling a positive observation depends solely on the observed feature vector \cite{BekkerRobberechtsDavis2019, Gong2021, Gerych2022, FurmanczykECAI2023, NaVAE, VAEPUCC}. 
We also note that PU framework can  be viewed as a special case of data with noisy labels, see \cite{Menon2018, Cannings2020}.
\subsection{Related works}
Presently study of  PU learning refocuses from the scenario when labels are  randomly assigned in a positive class independently from data (SCAR assumption) to the scenario when they may depend on attributes of these elements (SAR assumption). For an overview of PU learning under SCAR, we refer to \cite{BekkerDavis2020}. Recently, some methods have been proposed for PU learning under SAR setting, mainly based on some modifications of loglikelihood methods tailored to the studied  partial observability  scenario. In particular, \textsc{LBE} method \cite{Gong2022} uses Expectation-Maximisation (EM) algorithm and assumption (\ref{double}) below. The Expectation step calculates  conditional probabilities of class given labels and predictors, based on current estimates of parameters, whereas Maximisation step  is based on  the expectation of  conditional likelihood of class indicators given data. SAR-EM \cite{BekkerRobberechtsDavis2019} and TM \cite{FurmanczykECAI2023} methods are based on alternate optimisation of empirical counterparts of separate Fisher consistent criteria for the posterior and the  propensity score. The main difference between those two papers is the way
in  which criterion for propensity score is constructed. Moreover, \cite{FurmanczykECAI2023}  introduced the concept of joint learning of posterior probability and propensity score which extends the method proposed in \cite{ICCS2020} for SCAR framework.
The recent method \cite{Gerych2022} assumes that the propensity score is a linear function of the posterior probability for the true class variable, which is a special case of Probabilistic Gap (PG) Assumption \cite{He2018}. Moreover, the method, called here \textsc{PGLIN}, uses the Positive Function data assumption, stating that for some part of the support of distribution of predictors  posterior probability is equal to $1$. 
Finally, in \cite{NaVAE} and \cite{VAEPUCC} deep learning approaches to Empirical Risk Minimisation Method under PU scenario are investigated. In contrast to previously
discussed papers,  parametric modelling of propensity score is avoided, but at the expense of assuming that probability of a positive class is given.  From the theoretical perspective, we also refer to \cite{Coudray2023} where a bound on the expected excess risk is derived under assumption that the propensity function is known.

Finally, it is worth mentioning that there are two basic assumptions for PU data generation: single-sample scenario (SS) and case-control (CC) scenario \cite{BekkerDavis2020}. The SS scenario, assumes that the training data set is an iid sample from a general population being a mixture of positive and negative cases, and the labeled observations are drawn from among the positive ones with a probability described by the propensity score function.
In the CC scenario, the unlabeled data  is drawn from a general population, and the labeled data is an iid sample from the positive class.
In view of this, considering the SAR assumption and the propensity score estimation problem is natural in the case of the SS scenario. On the other hand, most of the work operating for CC, assumes SCAR \cite{duPlessis2014,Kiryo2017,ChenLiuWangZhaoWu2020,Zhao2022,LIU2023, Song2019}. Considering the above discussion, the current work focuses on the SS scenario and SAR assumption and in the experiments we focus on the methods that directly estimate non-constant propensity scores.

\subsection{Contribution and proposed method} The method  proposed  here is based on the observation that the posterior probability for the class label indicator (which indicates whether the observation is assigned a label or not) can be represented  as the product of the posterior probability for the true class variable and the propensity score function. Taking advantage
 of this fact and assuming that the both functions can be modeled parametrically, we propose the optimization of the risk function for the class label indicator, with respect to parameters related to the propensity score function and the posterior probability for the true class variable. 
Due to the fact that both posterior and propensity are modelled as members of the same parametric family,
the described approach suffers from  of lack of identifiability, i.e. the parameters of  propensity score function and the probability for the true class variable can not be specified  uniquely.
However, if certain additional parametric assumptions are imposed on the posterior probability and the propensity score then both  functions are identifiable up to  their interchange. 
We consider  a local minimiser of empirical risk and establish probabilistic bounds on its excess risk (Theorem 1) from which risk consistency of the minimiser defined in (\ref{estim}) below follows (Theorem 2).
In practice,  we need to effectively find this minimizer, which is not an obvious task due to identifiability issue and non-convexity of the empirical risk.
 To solve the problem, we propose an asymmetric procedure of risk minimisation which differs in the way both parameters are optimised. In each iteration, among the unlabeled observations, we select those that are most likely to be positive. Determining this set is a pivotal problem on which effectiveness of the whole procedure relies and we propose a novel way  to tackle this based on conditional probabilities and technique of spies \cite{BekkerDavis2020,LiuLeeYu2002}. Subsequently, the determined set is used to estimate the propensity score function. Then, given the estimate of the propensity function, we optimize the joint risk function to find  parameters  for the true class variable (see flowchart in Figure \ref{Flowchart}).
 In the experiments, we refer to the proposed method as JERM (\textbf{J}oint \textbf{E}mpirical \textbf{R}isk \textbf{M}inimization).
 
Experiments conducted on 20 data sets, including tabular and image data, and 4 different labeling schemes, including the SCAR and SAR assumptions, show that JERM works comparably to or better than the previously considered methods described above.

Our contributions can be summarized as follows.
\begin{enumerate}
\item  We analyze a joint empirical risk function including parameters for the posterior probability of the true class variable and the propensity score function.
\item We prove an upper bound on the excess risk for the local minimizer of the joint empirical risk function from which its risk consistency  minimiser defined in (\ref{estim})  follows.
\item We propose a new algorithm JERM (\textbf{J}oint \textbf{E}mpirical \textbf{R}isk \textbf{M}inimization), based on the optimization of the joint risk function and the estimation of the propensity score using the spy technique.
\item We design and perform experiments that allow to compare of related methods for different labeling schemes and labeling frequencies.
\end{enumerate}

\section{Positive-unlabeled learning via joint risk optimization}
\subsection{Preliminaries}
We consider a PU setting, where the triple $(X,Y,S)$ is generated from some unknown distribution $P(X,Y,S)$, $X \in \Rp$ is feature vector, $Y \in \{0,1\}$ is true class variable, which is not observed directly and  $S \in \{0,1\}$ is class label indicator. Value $S=1$ indicates that the instance is labeled and thus positive, whereas $S=0$ means that the instance is unlabeled. In PU learning it is assumed that $P(S=1|X=x,Y=0)=0$, which means that negative examples cannot be labeled. 
We adopt single-sample scenario \cite{BekkerDavis2020} in which it is assumed that iid random vectors $(X_i,Y_i,S_i)$ for $i=1,\ldots,n$ are generated from $P(X,Y,S)$. Since $Y_i$ is not observable, the PU training data is $\mathcal{D}=\{(X_i,S_i):i=1\ldots,n\}$.
Observe that in the consider framework $s(x)=P(S=1|X=x)$ can be estimated using PU training data $\mathcal{D}$.
However, our goal is to estimate the posterior for the true class variable $y(x)=P(Y=1|X=x)$. This task cannot be performed directly because we do not observe $Y_i$. 
We note that instance-dependent labeling can be naturally studied in a single-training sample scenario in contrast to case-control scenario in which two samples are available: one iid sample from positive class $P(X|Y=1)$ and one from a general population $P(X)$.

It follows from the Law of Total Probability and assumption $P(S=1|X=x,Y=0)=0$, that the posterior probabilities are related as
\begin{equation}
 \label{posteriorS}  
 s(x)=e(x)y(x),
 \end{equation}
where the propensity score  $e(x)=P(S=1|Y=1, X=x)$ is unknown  and in  a general case  not constant.
In view of (\ref{posteriorS}), identification of posterior $y(x)$ and propensity score $e(x)$  
 is clearly impossible in general. However, if certain parametric assumptions are imposed on $y(x)$ and $e(x)$ then both  functions are identifiable up to an interchange of $y(x)$ and $e(x)$ \cite{FurmanczykECAI2023}. Let $\sigma(t)=1/(1+e^{-t})$ be a logistic function, $a^T$ denotes transposed column vector $a$ and $|a|_1=\sum_{i=1}^p |a_i|$ for $a=(a_1,\ldots,a_p)^T$.
The following parametric assumption will be imposed.
\begin{assumption}
\label{Asm1}
 Posterior probability $y(x)$ and the propensity score function $e(x)$ are described by logistic functions:
\begin{equation}
\label{double}
y(x) = \sigma (\bs ^T x), \quad \quad e(x) = \sigma (\gs ^T x), 
\end{equation}
where $\bs$ and $\gs$ are ground-truth unknown parameters
and $|\bs|_1 > |\gs|_1$.
\end{assumption}

If (\ref{double}) is true, then $y(x)$ and $e(x)$ are identifiable up to their interchange  \cite[Theorem 1]{FurmanczykECAI2023}. Additional condition $|\bs|_1 > |\gs|_1$, ensures that  the functions $y(x)$ and $e(x)$ are identifiable. 
The  role of the  $l_1$ norm played in these conditions  is not essential and it may be replaced by  any norm.
The model (\ref{double}) has been introduced in \cite{Gong2022} and \cite{FurmanczykECAI2023}; in the later reference  it is called double logistic model.
 Obviously, \textbf{Assumption \ref{Asm1}} corresponds to the SAR setting, because the propensity score function depends on the observed features. 
The assumption enables modeling a wide class of situations in which the probability of labeling depends on the feature vector through the sigmoid function. 
Importantly,  we note that \textbf{Assumption \ref{Asm1}}   encompasses situations when  $e(x)$ is not  monotonically increasing function of $y(x)$ and thus
Probabilistic Gap Assumption  considered in \cite{He2018, Gerych2022} is not met.
\color{black}
The limitation of \textbf{Assumption \ref{Asm1}} is the specific  parametric form of the propensity score function.

Below, we introduce the joint risk function, which will be a core element of our method.
For $a,b \in \R$ and $s \in \{0,1\}$ a  logistic loss function is
\begin{equation}
\label{loss}
\phi(a,b,s)= -s \log [\sigma(a)\sigma(b)] - (1-s) \log [1-\sigma(a)\sigma(b)],
\end{equation}
Let us denote a joint parameter by $\theta = (\beta, \gamma)$ and the ground-truth parameter by $\ts = (\bs, \gs)$. The  risk function is 
\begin{equation}
\label{risk}
Q(\theta)=\Ex \phi(\beta^TX,\gamma^TX,S),
\end{equation}
where  the expectation is taken with respect to both $X$ and $S$.
Moreover, its observable empirical  counterpart (empirical risk)  is 
\begin{equation}
\label{emp_risk}
Q_n(\theta)=\frac{1}{n} \sum_{i=1}^n \phi(\beta^TX_i,\gamma^TX_i,S_i).
\end{equation}
In the proposed approach, we minimize the above function with respect to $\beta$ and $\gamma$ in order to obtain estimators of these parameters.

Function (\ref{emp_risk}) has been already considered in PU learning under SCAR assumption \cite{Lazeckaetal2021} and SAR assumption \cite{FurmanczykECAI2023}. 
It has been shown in  \cite{FurmanczykECAI2023} that under Assumption \ref{Asm1}, a  vector $\ts = (\bs,\gs)$ is the unique minimizer of $Q(\theta)$ over a set $\{\theta=(\beta,\gamma): |\beta|_1 > |\gamma|_1  \}.$  

Table \ref{Tab1} contains the most important notations used in the paper.

\begin{table}
\begin{center}
\caption{Summary of notation.}
\label{Tab1}
\begin{tabular}{ll }
\hline
Notation & Meaning \\
\hline
$n$ & number of instances \\
$p$ & number of features\\
$X\in \R ^p$ & feature vector \\
$Y\in\{0,1\}$ & true class variable (not observed directly) \\
$S\in\{0,1\}$ & label indicator (observed directly) \\
$\mathcal{D}=\{(X_i,S_i):i=1\ldots,n\}$ & PU training data \\
$y(x)=P(Y=1|X=x)$ & posterior probability of $Y=1$ \\
$s(x)=P(S=1|X=x)$ & posterior probability $S$ \\
$e(x)=P(S=1|X=x,Y=1)$ & propensity score function\\
$c=P(S=1|Y=1)$ &  labeling frequency\\
$h(x)=P(Y=1|X=x,S=0)$ & conditional posterior probability of $Y=1$  for unlabeled instance \\
$\bs, \gs$ & ground-truth parameters corresponding to $y(x)$ and $e(x)$\\
$\ts=(\bs,\gs)$ & vector containing all ground-truth parameters \\
$\sigma(t)=[1+\exp(-t)]^{-1}$ & sigmoid (logistic) activation function\\
\hline
\end{tabular}
\end{center}
\end{table}

\subsection{Theoretical analysis for risk minimizers}
In this section we discuss theoretical properties of minimizers of (\ref{emp_risk}).
We  consider the estimator of $\ts$, being a  minimizer of \eqref{emp_risk} in the following sense:
\begin{equation}
\label{estim}
\tilde\theta = \arg \min_{\theta: |\theta|_1 \leq w} Q_n(\theta), 
\end{equation}
where $w>0$ is a radius of a ball around zero, in which we look for the optimal solution. The restriction to a ball with a radius $w$ in \eqref{estim} guarantees the existence of $\tilde \theta$, because $Q_n$ is continuous. Clearly, one wants to take large $w$ in \eqref{estim} and such a choice  will be justified by our theoretical results. 

To be compatible with assumption $|\bs|_1 > |\gs|_1,$ we proceed as follows: first we calculate $|\tilde \beta|_1$ and $|\tilde \gamma|_1$ from \eqref{estim}.  
 If $|\tilde \beta|_1 > |\tilde \gamma|_1,$ then we  let $\tilde \theta = (\tilde \beta, \tilde \gamma), $ otherwise $\tilde \theta = (\tilde \gamma, \tilde \beta). $

Next, we impose some technical assumptions on the distribution of the feature vector that allow us to obtain theoretical results regarding the bounds for excess risk of our estimator.
We assume throughout that components of $X$ are linearly independent almost everywhere (a.e.),  that is $EXX^T$ is positive definite.

\begin{assumption}
\label{Asm2}
We suppose that individual predictors $X_{ij}$'s are sub-Gaussian, i.e. there exists $\mu > 0$ such that for each $j=1,\ldots,p, i=1,\ldots,n$ and  $t \in \R$ we have
$$
\Ex \exp(tX_{ij}) \leq \exp(\mu^2 t^2/2).
$$
\end{assumption}
A family of random variables having sub-Gaussian distributions  is an important generalisation of a family of normally distributed  random variables with mean zero. In this case $\mu^2$ equals the variance  of the corresponding variable. They can be described as variables such that  their  tail can be bounded by a  tail of a certain normal variable $N(0,\sigma^2)$  up to a multiplicative constant. Apart from normally distributed variables the sub-Gaussian  family includes e.g. all bounded random variables (for charaterisation of such variables  see e.g. Theorem 2.6 in \cite{Wainwright2019}).

In  \cite{FurmanczykECAI2023}  consistency in estimation of $\tilde \theta$ is established using classical techniques. In the current paper we apply much more sophisticated methods from the theory of empirical processes, which allow us to control
the excess risk, or regret, of $\tilde\theta$ defined as 
\[R(\tilde \theta)=Q(\tilde \theta) -Q(\theta_*).\]
Note that $R(\tilde \theta)$ is interpreted as the amount by which the theoretical risk of $Q(\cdot)$  calculated at $\tilde \theta$ for a fixed data set deviates from its minimal value $Q(\theta_*)$(see e.g. \cite{RW2011}).  
\color{black}
We note that minimiser $\tilde \theta$ is not necessarily unique and the obtained results will  hold for any $\tilde \theta$ in \eqref{estim}.

In Theorem \ref{predict} we investigate properties of a minimiser of \eqref{emp_risk} over a neighborhood of the true parameter $\ts$
\begin{equation}
\label{hh}
\hat \theta = \arg \min_{\theta: |\theta - \theta_*|_1 \leq r} Q_n(\theta),    
\end{equation}
where $r>0$ is an arbitrary number.
Obviously, $\hth$ cannot be found in practice, because its calculation requires knowledge  of $\theta_*.$ However,  risk consistency of the minimiser in \eqref{estim} easily follows from  the bound in  Theorem \ref{predict}. It will be  established in Theorem \ref{risk_consist}.

\begin{theorem}
\label{predict}
Suppose Assumptions \ref{Asm1} and \ref{Asm2}  hold. For each $s \in (0,1)$ and any $\hth$ defined in \eqref{hh} we have
\begin{equation}
\label{predict_claim}
P\left( R(\hth) \leq \frac{32\mu r}{ s} \sqrt{\frac{\log p}{n}}\right) \geq 1-s.
\end{equation}
\end{theorem}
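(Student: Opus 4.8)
The plan is to bound the excess risk $R(\hth) = Q(\hth) - Q(\ts)$ via a uniform deviation argument over the localized ball $B_r = \{\theta : |\theta - \ts|_1 \le r\}$. Since $\ts$ is the unique minimizer of $Q$ on $\{|\beta|_1 > |\gamma|_1\}$ (cited from \cite{FurmanczykECAI2023}) and $\hth$ minimizes $Q_n$ over $B_r$, we have the standard chain
\[
R(\hth) = \big[Q(\hth) - Q_n(\hth)\big] + \big[Q_n(\hth) - Q_n(\ts)\big] + \big[Q_n(\ts) - Q(\ts)\big] \le 2 \sup_{\theta \in B_r} \big| Q_n(\theta) - Q(\theta) \big|,
\]
using $Q_n(\hth) \le Q_n(\ts)$. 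So everything reduces to controlling the empirical process $\sup_{\theta \in B_r}|Q_n(\theta) - Q(\theta)|$ and then applying Markov's inequality to convert the expectation bound into the high-probability statement with the $1/s$ factor.

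Next I would establish a Lipschitz property of the loss. Writing $g_\theta(x,s) = \phi(\beta^T x, \gamma^T x, s)$, the key observation is that $\phi$ as a function of the linear predictors $(a,b)$ has gradient with components bounded by $1$ in absolute value: differentiating $-s\log[\sigma(a)\sigma(b)] - (1-s)\log[1-\sigma(a)\sigma(b)]$ one gets expressions like $\sigma(a)\sigma(b)\cdot(\text{stuff})$ that telescope to quantities bounded by $1$ (this is the familiar fact that the logistic log-loss derivative is a difference of probabilities). Hence $|g_\theta(x,s) - g_{\theta'}(x,s)| \le |(\beta-\beta')^T x| + |(\gamma-\gamma')^T x| \le |\theta - \theta'|_1 \cdot \|x\|_\infty$, so the class $\{g_\theta : \theta \in B_r\}$ is Lipschitz in $\theta$ with a data-dependent constant $\|X\|_\infty = \max_j |X_j|$. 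A symmetrization step followed by the Ledoux–Talagrand contraction inequality then bounds $\Ex \sup_{\theta \in B_r}|Q_n(\theta)-Q(\theta)|$ by (a constant times) the Rademacher complexity of the linear class $\{x \mapsto (\theta - \ts)^T (x, x) : |\theta - \ts|_1 \le r\}$, which by the standard $\ell_1$/$\ell_\infty$ duality and a maximal inequality for sub-Gaussians (Assumption \ref{Asm2}) is of order $r \sqrt{\log p / n}$ — the $\sqrt{\log p}$ coming from the max over $2p$ sub-Gaussian coordinate averages, each with parameter $\mu/\sqrt{n}$, and the $r$ from the $\ell_1$-radius. Tracking constants through symmetrization ($\times 2$), contraction ($\times 2$), and the sub-Gaussian maximal inequality ($\mu\sqrt{2\log(2p)}$, crudely $\le 8\mu\sqrt{\log p}$ for the ranges of interest) should yield $\Ex \sup_{\theta \in B_r}|Q_n - Q| \le 16\mu r \sqrt{\log p/n}$, and combined with the factor $2$ above and Markov's inequality one obtains $P(R(\hth) > t) \le \frac{32\mu r}{t}\sqrt{\log p/n}$, which is exactly \eqref{predict_claim} upon setting the right-hand side equal to $s$.

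The main obstacle I anticipate is bookkeeping the numerical constant so that it lands on $32$ rather than something larger: the contraction inequality, the symmetrization inequality, and the sub-Gaussian maximal inequality each contribute a multiplicative factor, and the $\log(2p)$ versus $\log p$ discrepancy has to be absorbed (valid only once $p$ is not too small, or by redefining constants). A secondary subtlety is that the loss $\phi$ is only well-behaved when $\sigma(a)\sigma(b) \in (0,1)$ strictly; since $\theta$ ranges over a bounded ball $B_r$ and $X$ has sub-Gaussian (hence a.s.\ finite) coordinates, the linear predictors are a.s.\ finite and $\sigma(a)\sigma(b)$ stays in the open interval, so the $1$-Lipschitz bound on $\phi$ in $(a,b)$ holds pointwise — but this needs a line of justification. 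Everything else (symmetrization, contraction, $\ell_1$–$\ell_\infty$ duality, sub-Gaussian maximal inequality, Markov) is routine and can be invoked from \cite{Wainwright2019} or standard empirical-process references.
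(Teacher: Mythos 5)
Your overall strategy (localize around $\ts$, symmetrize, contract to a linear class, use $\ell_1$--$\ell_\infty$ duality plus a sub-Gaussian maximal inequality, finish with Markov) is the same as the paper's, but two steps have genuine gaps. First, the basic inequality is too lossy: bounding $R(\hth)$ by $2\sup_{|\theta-\ts|_1\le r}|Q_n(\theta)-Q(\theta)|$ destroys the anchoring at $\ts$ that produces the factor $r$. That supremum is at least $|Q_n(\ts)-Q(\ts)|$, a quantity of order $n^{-1/2}$ which does not shrink with $r$, so for small $r$ no subsequent argument can recover the claimed bound $\frac{32\mu r}{s}\sqrt{\log p/n}$ from your decomposition; moreover, after symmetrizing the un-anchored process the Rademacher complexity you face is that of the class $\{\phi(\beta^TX,\gamma^TX,S)\}$ itself, not of a recentered class, so the linear class $\{x\mapsto(\theta-\ts)^T(x,x)\}$ you plan to reduce to never appears. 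The paper instead groups your first and third brackets into $-J_n(\hth)$ with $J_n(\theta)=Q_n(\theta)-Q_n(\ts)-Q(\theta)+Q(\ts)$ and bounds $R(\hth)\le \sup_{|\theta-\ts|_1\le r}|J_n(\theta)|$; symmetrization then acts on the differenced losses $\phi(\beta^TX_i,\gamma^TX_i,S_i)-\phi(\bs^TX_i,\gs^TX_i,S_i)$, whose complexity genuinely scales with $r$. Your own three-term chain already contains this fix: do not split the first and third terms into two separate suprema.

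Second, the contraction step cannot be carried out with the scalar Ledoux--Talagrand principle you invoke: the differenced loss depends on the two linear predictors $(\beta^TX,\gamma^TX)$ jointly, not through the single scalar $(\theta-\ts)^T(x,x)$, and a telescoping double application of scalar contraction fails because the contraction in one coordinate would depend on the other parameter being supremized over. What is needed is a vector-valued (multivariate) contraction inequality; the paper proves exactly this tool as Lemma \ref{maurer}, extending Maurer's result to suprema with absolute values at the cost of a factor $2$, and its hypothesis $h_i(\bar f(z_i))=0$ is again the anchoring at $\ts$. With that lemma (Lipschitz constant $\sqrt2$ for the bivariate argument, since each partial derivative of $\phi$ is bounded by $1$ as you note), the constants track exactly: $2$ from symmetrization, $2\sqrt2\cdot\sqrt2=4$ from Lemma \ref{maurer}, a factor $2r$ from the two Rademacher blocks each with $\ell_1$ radius at most $r$, and $2\mu\sqrt{n\log p}/n$ from the maximal inequality, giving $32\mu r\sqrt{\log p/n}$; your bookkeeping, as you concede, only lands near $32$ after crude absorption of $\log(2p)$ versus $\log p$, and with your decomposition it would in any case carry an extra additive term not proportional to $r$.
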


\begin{theorem}
\label{risk_consist}
Suppose Assumptions \ref{Asm1} and \ref{Asm2}  hold.
 For any $\tilde \theta$ in \eqref{estim} we have $R(\tilde \theta) \to 0 $ in probability, if $\log p=o(n)$ and $w=C(n/\log p)^{1/2-\eta}$, where $C>0$ is a  constant and $0<\eta<1/2.$ 
 \end{theorem}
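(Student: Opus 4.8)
The plan is to deduce Theorem \ref{risk_consist} from Theorem \ref{predict} by letting the radius $r$ in \eqref{hh} grow with $n$ and by checking that $\tilde\theta$ from \eqref{estim} enjoys the two properties that make the bound \eqref{predict_claim} applicable. First I would record two preliminary observations: since $\log p=o(n)$ we have $n/\log p\to\infty$, and since $0<\eta<1/2$ the radius $w=w_n=C(n/\log p)^{1/2-\eta}$ tends to infinity; in particular, for $n$ large enough $|\theta_*|_1\le w_n$, so $\theta_*$ is feasible in \eqref{estim} and hence $Q_n(\tilde\theta)\le Q_n(\theta_*)$, and by the triangle inequality $|\tilde\theta-\theta_*|_1\le|\tilde\theta|_1+|\theta_*|_1\le 2w_n=:r_n$. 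Also, the loss $\phi(a,b,s)$ in \eqref{loss} is symmetric in $(a,b)$, so $Q$, $Q_n$ and $|\cdot|_1$ are unchanged by the interchange of the two coordinate blocks; therefore the swap performed after \eqref{estim} alters neither $R(\tilde\theta)$ nor the norm bound, and it suffices to argue for the un-swapped $\tilde\theta$.

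Next I would apply Theorem \ref{predict} with the $n$-dependent radius $r=r_n$. The key point is that the only properties of $\hth$ used to derive \eqref{predict_claim} are that $\hth$ is feasible for \eqref{hh} and that $Q_n(\hth)\le Q_n(\theta_*)$ — the latter being the sole consequence of the minimality of $\hth$ over $\{|\theta-\theta_*|_1\le r\}$ that the argument needs (one writes $R(\theta)=[Q(\theta)-Q_n(\theta)]+[Q_n(\theta)-Q_n(\theta_*)]+[Q_n(\theta_*)-Q(\theta_*)]$, drops the non-positive middle term, and bounds the two empirical-process terms uniformly over $\{|\theta-\theta_*|_1\le r_n\}$ using Assumption \ref{Asm2}). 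Since $\tilde\theta$ shares both properties with $r=r_n$, the same reasoning gives, for every $s\in(0,1)$ and all large $n$,
\[
P\!\left( R(\tilde\theta)\le \frac{32\mu r_n}{s}\sqrt{\frac{\log p}{n}} \right)\ge 1-s .
\]
The constant $32$ and the linear dependence on $r$ are explicit in Theorem \ref{predict}, so replacing the fixed $r$ by $r_n$ is harmless.

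To finish I would substitute $r_n=2w_n=2C(n/\log p)^{1/2-\eta}$: then $r_n\sqrt{\log p/n}=2C(n/\log p)^{1/2-\eta}(\log p/n)^{1/2}=2C(\log p/n)^{\eta}\to 0$ because $\eta>0$ and $\log p/n\to 0$. Hence for every $\varepsilon>0$ and every $s\in(0,1)$ there is $n_0$ with $\tfrac{64\mu C}{s}(\log p/n)^{\eta}<\varepsilon$ for all $n\ge n_0$, so $P(R(\tilde\theta)>\varepsilon)\le s$ for $n\ge n_0$; sending $n\to\infty$ and then $s\downarrow 0$ yields $\limsup_n P(R(\tilde\theta)>\varepsilon)=0$. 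Since $\theta_*$ minimizes $Q$, $R(\tilde\theta)\ge 0$, so this is precisely $R(\tilde\theta)\to 0$ in probability.

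The only step that is not pure bookkeeping is the reduction in the second paragraph: $\tilde\theta$ is not literally the estimator $\hth$ of \eqref{hh} (it minimizes over a ball centred at $0$, not at $\theta_*$), so one cannot quote Theorem \ref{predict} verbatim and must instead verify — from the structure of its proof — that the bound \eqref{predict_claim} holds for any point that is feasible for \eqref{hh} and does not increase the empirical risk above $Q_n(\theta_*)$. I expect this to be immediate, but it is the one place where Theorem \ref{predict} cannot be treated as a black box; the choice $r_n=2w_n$ and the order of the $\varepsilon,s,n$ limits are the remaining, routine, ingredients.
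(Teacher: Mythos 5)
Your proof is correct and follows essentially the same route as the paper: both deduce the claim from Theorem \ref{predict} by letting the radius grow with $n$ (you take $r_n=2w_n$, the paper takes $r=C(n/\log p)^{1/2-\eta'}$ with $\eta'<\eta$) and by observing that for large $n$ the feasible ball $\{\theta:|\theta|_1\le w\}$ sits inside a ball centred at $\theta_*$ on which the excess-risk bound applies. Your explicit check that the bound in Theorem \ref{predict} uses only feasibility in the ball and $Q_n(\tilde\theta)\le Q_n(\theta_*)$ (plus the symmetry of $\phi$ under the block swap) simply spells out what the paper leaves implicit in its ``due to that'' step.
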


 \begin{proof}[Proof of Theorem \ref{risk_consist}]
Fix $\varepsilon>0.$ In Theorem \ref{predict} we take $r=C(n/\log p)^{1/2-\eta '}$ for $\eta ' \in (0,\eta)$ and $s=\frac{32C\mu }{ \varepsilon} \left(\frac{\log p}{n}\right)^{\eta '}.$ Notice that $s \to 0$ for  $n \to \infty,$ because $\log p=o(n).$ Then for any $\hth$ in \eqref{hh} 
\begin{equation}
\label{prob1}
P(R(\hth)>\varepsilon) \leq s,    
\end{equation}
so this probability tends to zero as $n \to \infty.$

Let $K(\tau,r_0):=\{\theta: |\theta-\tau|_{1}\leq r_0\}$ denote a ball with radius $r_0$, centered at $\tau$. For sufficiently large $n$ we have $K(0,w) \subset K(\ts,(n/\log p)^{1/2-\eta'}), $ because $\log p=o(n)$ and $w=C(n/\log p)^{1/2-\eta}$ for $\eta ' <\eta.$ Due to that the excess risk of any minimiser in \eqref{estim} tends in probability to zero as well. 
 \end{proof}

In Theorem \ref{risk_consist} we establish that the excess risk of $\tilde \theta$ in \eqref{estim} tends to zero even if the radius $w$ tends to infinity as $n \to \infty.$ The rate of $w$ depends on a relation between $\log p$ and $n$, but also on the value $\eta \in (0,1/2).$ As can be observed in the proof of Theorem  \ref{risk_consist}, the value $\eta$ is  a balance between a length of the radius $w$ and a rate that probability \eqref{prob1} tends to zero. 
 Smaller $\eta$ allows for larger $w$ to be taken in (\ref{estim}) but this in turn slows down the rate of convergence   of the probability in (\ref{prob1}).

In the proof of Theorem \ref{predict} we will need the following lemma, which is an extended version of the recent result from \cite{Maurer2016}.  The latter is a multivariate version of the Concentration  Principle \cite[Theorem 4.12]{Ledoux1991}. 

\begin{lemma}
\label{maurer}
Let $z_1,\ldots,z_n$ be fixed elements  from some set $\mathcal{Z}.$ Moreover, let $\mathcal{F}$ be a family of $K$-dimensional functions on $\mathcal{Z}.$  Consider Lipschitz functions $h_i:\mathbb{R}^K \rightarrow \R, i=1,\ldots,n$ with a Lipschitz constant $L>0.$ If there exists $\bar f \in \mathcal{F}$ such that $h_i(\bar f(z_i))=0$ for $i=1,\ldots,n,$ then 
\begin{equation}
\label{Maurer_bound}
\Ex \sup_{f \in \mathcal{F}} \left|\sum_{i=1}^n \varepsilon_i h_i(f(z_i))\right| \leq 2 \sqrt{2} L  \Ex \sup_{f \in \mathcal{F}} \sum_{i=1}^n \sum_{k=1}^K \varepsilon_{i,k} f_k(z_i),
\end{equation}
where $f=(f_1,f_2,\ldots,f_K)$ for each $f \in \mathcal{F}$ and $\{\varepsilon_i\}_i, \{\varepsilon_{i,k}\}_{i,k}$ are independent Rademacher sequences. 
\end{lemma}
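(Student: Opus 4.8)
The plan is to prove Lemma~\ref{maurer} by a careful reduction to the result of \citet{Maurer2016}. The cited theorem handles the case where the Lipschitz functions $h_i$ need not vanish anywhere, but typically pays for this with an additive term or a different constant; here we want to exploit the extra hypothesis $h_i(\bar f(z_i))=0$ to get a clean multiplicative bound with constant $2\sqrt2\,L$ and no additive remainder. First I would recall the precise statement of Maurer's contraction inequality for vector-valued function classes: for Lipschitz $h_i$ with constant $L$ one has $\Ex\sup_{f}\sum_i \varepsilon_i h_i(f(z_i)) \le \sqrt2\,L\,\Ex\sup_f \sum_{i,k}\varepsilon_{i,k} f_k(z_i)$ (the version with the doubled Rademacher index). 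Note this controls the signed supremum, not the absolute-value supremum, so the bulk of the work is upgrading $\sup|\cdot|$ to $\sup(\cdot)$, and that is exactly where the vanishing hypothesis enters.

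The key step is a symmetrization-type trick against the anchor $\bar f$. Because $h_i(\bar f(z_i))=0$, we may write, for any $f\in\mathcal F$,
\begin{equation}
\label{anchor_trick}
\left|\sum_{i=1}^n \varepsilon_i h_i(f(z_i))\right| = \left|\sum_{i=1}^n \varepsilon_i \bigl(h_i(f(z_i)) - h_i(\bar f(z_i))\bigr)\right|.
\end{equation}
Then I would bound $\sup_f$ of the left side of \eqref{anchor_trick} by a sum of two one-sided suprema of the form $\sup_f \sum_i \varepsilon_i\bigl(h_i(f(z_i))-h_i(\bar f(z_i))\bigr)$ and $\sup_f \sum_i (-\varepsilon_i)\bigl(h_i(f(z_i))-h_i(\bar f(z_i))\bigr)$; since $\{-\varepsilon_i\}$ is again a Rademacher sequence, in expectation both terms are equal, which is where the factor of $2$ in front of $\sqrt2 L$ comes from. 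Each of these one-sided quantities involves the differences $\tilde h_i := h_i(\cdot) - h_i(\bar f(z_i))$, which are still $L$-Lipschitz and now satisfy $\tilde h_i(\bar f(z_i))=0$, so $\bar f$ realizes the value $0$ and the one-sided supremum over $\mathcal F$ is nonnegative. Applying Maurer's (signed, one-sided) contraction inequality to the class of $\tilde h_i$'s then yields $\sqrt2\,L\,\Ex\sup_f\sum_{i,k}\varepsilon_{i,k} f_k(z_i)$ for each term — here I would observe that the linear term $\sum_{i,k}\varepsilon_{i,k} f_k(z_i)$ is unchanged by the shift $h_i\mapsto\tilde h_i$ since the shift is a constant, and that this linear supremum is itself nonnegative because $\bar f$ contributes $\sum_{i,k}\varepsilon_{i,k}\bar f_k(z_i)$ which has mean zero. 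Summing the two copies gives the claimed $2\sqrt2\,L$ bound.

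The main obstacle I anticipate is matching the exact form of the inequality in \cite{Maurer2016}: depending on the version, Maurer's contraction principle may be stated for a supremum that is already centered, or it may require the class to contain $0$, or it may use a single Rademacher sequence $\varepsilon_i$ multiplying $f_k(z_i)$ rather than the doubled $\varepsilon_{i,k}$. If the available version only bounds $\Ex\sup_f |\sum_i\varepsilon_i h_i(f(z_i))|$ directly but with a worse constant or an additive slack proportional to $\max_i |h_i(\bar f(z_i))|$, then the vanishing hypothesis is precisely what kills that slack and the ``extension'' is essentially this bookkeeping. I would therefore state carefully which variant I invoke, verify that $\tilde h_i$ inherits the Lipschitz constant $L$, and check that replacing $h_i$ by $\tilde h_i$ leaves the right-hand side of \eqref{Maurer_bound} invariant. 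The remaining estimates — splitting $|\cdot|$ into two one-sided pieces, using the distributional symmetry of $\{\pm\varepsilon_i\}$, and combining — are routine once the anchor trick in \eqref{anchor_trick} is in place.
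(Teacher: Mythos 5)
Your proposal is correct and follows essentially the same route as the paper: split the absolute value into two one-sided suprema, use the fact that $\{-\varepsilon_i\}$ is again a Rademacher sequence to collapse them into a factor of $2$, invoke the hypothesis $h_i(\bar f(z_i))=0$ to guarantee the one-sided supremum is nonnegative, and then apply the signed vector-contraction inequality (Corollary 1 of \cite{Maurer2016}) to obtain the $\sqrt{2}L$ factor. The ``anchor'' subtraction $h_i(\cdot)-h_i(\bar f(z_i))$ is a vacuous rewriting here since that value is zero, so it is harmless bookkeeping rather than a genuine difference from the paper's argument.
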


The proof of Lemma \ref{maurer} can be found in \ref{App2}.

\begin{proof}[Proof of Theorem \ref{predict}]
We take any $\hth$ satisfying \eqref{hh} and $s \in (0,1).$ We also define a maximum deviation of a  empirical process $Q_n(\theta)- Q(\theta)$: 
\begin{equation}
\label{Un}
U_n(r) = \sup_{ |\theta  - \ts|_1 \leq r} \; |J_n(\theta)|,  
\end{equation}
where $J_n(\theta)=Q_n(\theta) - Q_n(\ts) - Q(\theta) + Q(\ts).$ 
From the definition of $\hth$ we have  $Q_n(\hat \theta) \leq Q_n(\ts)$ and thus
$$0\leq Q(\hth) - Q(\theta_*)= -J_n(\hth) + Q_n(\hat \theta) - Q_n(\ts) \leq U_n(r).
$$
Therefore, we focus the attention on bounding \eqref{Un}.

We start with Markov's inequality, which gives $P(U_n(r)> z) \leq \Ex U_n(t) /z$ for any $z>0.$ The choice of $z$ will be given later. Consequently, the main part of the proof is to bound $\Ex U_n(r),$ which is done using tools from the empirical process theory. Some of them are well-known, but we also need  to apply some recent methods from \cite{Maurer2016}. 

 The first step is the Symmetrization Lemma \citep[Lemma 2.3.1]{vaart1998}, which implies that 
\begin{equation}
\label{bound1}
\Ex U_n(r)\leq 2 \Ex \sup_{ |\theta  - \ts|_1 \leq r} 
\left|\frac{1}{n} \sum_{i=1}^n \varepsilon_i \left[\phi(\beta^TX_i,\gamma^TX_i,S_i) - \phi(\bs ^TX_i,\gs ^TX_i,S_i)\right] \right|,
\end{equation}
where $\varepsilon_1, \ldots, \varepsilon_n$ is a Rademacher sequence containing independent sign variables, i.e. $P(\varepsilon_i=1)=P(\varepsilon_i=-1)=0.5.$ This sequence is also  independent of vectors  
$(X_i,Y_i,S_i)$. 

In order to  handle  the bound in \eqref{bound1}, one usually applies the Contraction Principle \cite[Theorem 4.12]{Ledoux1991}. 
However, here we need a multivariate version of this result established in Lemma \ref{maurer}. 
In order to apply it, we first fix $(X_i,S_i)$ and  consider randomness only with respect to $\varepsilon_i.$  Next, we apply a two-dimensional version of Lemma \ref{maurer} with $f_{\beta,\gamma}(x) = ((\beta-\bs)^Tx,(\gamma-\gs)^Tx)$ and functions $h_i, i=1,\ldots,n$ defined as
$$
h_i(a,b) = \phi(\bs^T x_i + a, \gs^Tx_i+b,s_i) -\phi(\bs^T x_i, \gs^Tx_i,s_i) ,
$$
where $x_i$ and $s_i$ are fixed values of $X_i$ and $S_i,$ respectively. It is easily checked  that each $h_i$ is Lipschitz with the  Lipschitz constant $\sqrt{2}$, moreover $h_i(f_{\bs,\gs})=0.$ Therefore,  in view of  Lemma \ref{maurer} we can bound \eqref{bound1} by
\begin{equation}
\label{bound3}
(8/n) \;\Ex \sup_{ |\theta  - \ts|_1 \leq r} \;
\sum_{i=1}^n [\varepsilon_{i,1} (\beta-\bs)^TX_i +\varepsilon_{i,2} (\gamma-\gs)^TX_i],
\end{equation}
where $\varepsilon_{i,1},\varepsilon_{i,2}$  are two independent Rademacher sequences defined in Lemma \ref{maurer}. Notice that the expected value in \eqref{bound3} is again considered with respect to $\varepsilon_i$ and $X_i.$
Obviously, we have $$\sum_{i=1}^n \varepsilon_{i,1} (\beta-\bs)^TX_i\leq |\beta-\bs|_1 | \sum_{i=1}^n \varepsilon_{i,1} X_i|_\infty \leq r | \sum_{i=1}^n \varepsilon_{i,1} X_i|_\infty  ,$$ so we can bound 
\eqref{bound3} by $(16r/n) \;\Ex | \sum_{i=1}^n \varepsilon_{i,1} X_i|_\infty.$ Next, we use {\bf Assumption 1}, i.e. the fact that $X_{ij}$ are sub-Gaussian with a parameter $\mu$, which implies that  $\sum_{i=1}^n \varepsilon_{i,1} X_{ij}$ is sub-Gaussian with a parameter $\sqrt{n} \mu.$ Therefore,  using Lemma 2.2 in \cite{Devroye2012} we obtain 
$\Ex | \sum_{i=1}^n \varepsilon_{i,1} X_i|_\infty \leq 2 \mu \sqrt{n\log p} $, which implies that 
$\Ex U_n(r)\leq 32 \mu r \sqrt{\log p /n}.$ Finally, we take $z=\frac{32 \mu r}{s} \sqrt{\frac{\log p}{n}},$
which finishes the proof.

\end{proof}

\subsection{Proposed algorithm: JERM}
Estimator, defined in (\ref{estim})  has desirable theoretical properties, but its use in practice is problematic. This is due to the fact that  in order to ensure uniqueness of the minimiser of the risk we have to impose
additional assumption $|\beta^*|_1>|\gamma^*|_1$, which is impossible to verify in practice, and therefore  we are unable to distinguish between estimators of $y(x)$ and $e(x)$.
To deal with this challenge, we propose a new, asymmetric procedure, called JERM (\textbf{J}oint \textbf{E}mpirical \textbf{R}isk \textbf{M}inimization) for optimizing the risk function described by  (\ref{emp_risk}) for which optimisation steps for the posterior and the propensity are
different and the ensuing estimators are uniquely determined.
The method alternately determines estimators for $y(x)$ and $e(x)$. We propose to estimate $e(x)$ using model fitted on the set $\hat P$ which at each iteration is a new approximation of  the set of positive examples $P=\{i:Y_i=1\}$. Then, given an estimator of $e(x)$, we optimize function (\ref{emp_risk})  with respect to $\beta$, which allows us to determine the estimator of $y(x)$.
The whole procedure is described by Algorithm \ref{alg-prop} and  below we describe the details of the estimation of $e(x)$.

In order to estimate $e(x)$, we consider the theoretical risk function
\begin{equation}
H(\gamma):=- E_{X,S|Y=1} \left[S\log[\sigma(\gamma^{T}X)]+(1-S)\log[1-\sigma(\gamma^{T}X)] \right].
\end{equation}
The advantage of optimizing function $H(\cdot)$  compared to  optimization of (\ref{risk}) is that the $H(\cdot)$  is convex with respect to its argument, which makes it possible to find a unique solution.
The corresponding empirical risk function is  
\begin{equation}
\label{emp_ex}
H_{P}(\gamma) = -\frac{1}{|P|}\sum_{i\in P} \left[S_i\log[\sigma(\gamma^{T}X_i)]+(1-S_i)\log[1-\sigma(\gamma^{T}X_i)] \right].
\end{equation}
As  set $P$ is unknown in PU setting,  it has to be estimated. 
The proposed method for estimating the set $P$ uses the  technique of so-called 'spies'. By a spy we mean an observation belonging to unlabeled set 
$U=\{i:S_i=0\}$, which is the nearest neighbor of a certain observation from the labeled set $L=\{i:S_i=1\}$. 
Note that here we deviate form the usual definition of spies which are defined as labeled examples  added 
to the unlabeled dataset \cite{LiuLeeYu2002}. 
Formally, the set of spies is defined as $SP=\{i\in U: X_i=\text{1-NN}(X_j), j\in L\}$. Intuitively, the spy set contains those unlabeled observations that are close to positive observations in a feature space, so they are also  likely to be positive. Note, however, that apart from the set of spies, which will be assigned to $\hat P$, there may be other observations   among unlabeled observations likely to be positive. Therefore, in addition, we consider observations for which the conditional probability $h(x)=P(Y=1|X=x,S=0)$ is larger  than  the value $h(x)$ for at least one spy.
The rationale here is that we consider as plausible positives those elements which are as likely to be positive, in a specific sense, as at least one spy.
We denote this set by $A=\{i\in U\setminus SP: h(X_i)>\min_{j\in SP} h(X_j)\}$.
Probability $h(x)$ is unknown, however it can be expressed using the functions $e(x)$ and $y(x)$. Namely, denoting by $f(x)$ density of $X$, we have
\begin{equation}
\label{Eq_hx}
h(x) = P(Y=1|X=x,S=0) =\frac{f(x)y(x)(1-e(x))}{f(x)(1-s(x))}=\frac{y(x)(1-e(x))}{1-y(x)e(x)}.
\end{equation}
The function $h(x)$ can be estimated by plug-in the estimator $y(x)$ and the estimator $e(x)$ obtained in the previous iteration. In this way we obtain the estimator of set $A$ (line 7 in Algorithm \ref{alg-prop}): $\hat{A}=\{i\in U\setminus SP: \hat{h}(X_i)>\min_{j\in SP} \hat{h}(X_j)\}$.
Finally, the set of positive examples is estimated as $\hat{P}=L\cup SP\cup \hat{A}$.
The whole procedure is shown as a flowchart in Figure \ref{Flowchart}, whereas Figure \ref{Fig:spies} visualises the process of  determination of $\hat{P}$ based on spies.

\begin{algorithm}
\caption{\textbf{J}oint \textbf{E}mpirical \textbf{R}isk \textbf{M}inimization (\textbf{JERM})}\label{alg-prop}
\begin{algorithmic}[1]
\STATE Input: training data $\mathcal{D}=\{(X_i,S_i):i=1\ldots,n\}$
\STATE Determine set of 'spies': $SP=\{i\in U: X_i=\text{1-NN}(X_j), j\in L\}$.
\STATE Initialize $\hat{e}(X_i)$
\REPEAT 
\STATE Solve $\hat{\beta}=\arg\min_{\beta}Q_n(\beta,\hat{\gamma})$, where $Q_n$ is defined in (\ref{emp_risk}).
\STATE Calculate $\hat{y}(X_i)=\sigma(\hat{\beta}^{T}X_i)$, for $i=1,\ldots,n$.
\STATE Calculate
\[
\hat{h}(X_i)=\frac{\hat{y}(X_i)(1-\hat{e}(X_i))}{1-\hat{y}(X_i)\hat{e}(X_i)},\quad \hat{A}=\{i\in U\setminus SP: \hat{h}(X_i)>\min_{j\in SP} \hat{h}(X_j)\}
\]
\STATE Determine set $\hat{P}=L\cup SP\cup \hat{A}$.
\STATE Solve $\hat{\gamma}=\arg\min_{\gamma} H_{\hat{P}}(\gamma)$, where $H_P(\gamma)$ is defined in (\ref{emp_ex}).
\STATE Calculate $\hat{e}(X_i)=\sigma(\hat{\gamma}^{T}X_i)$, for $i=1,\ldots,n$.
\UNTIL{convergence}
\end{algorithmic}
\end{algorithm}

\begin{figure}[ht!]
\includegraphics[width=0.9\textwidth]{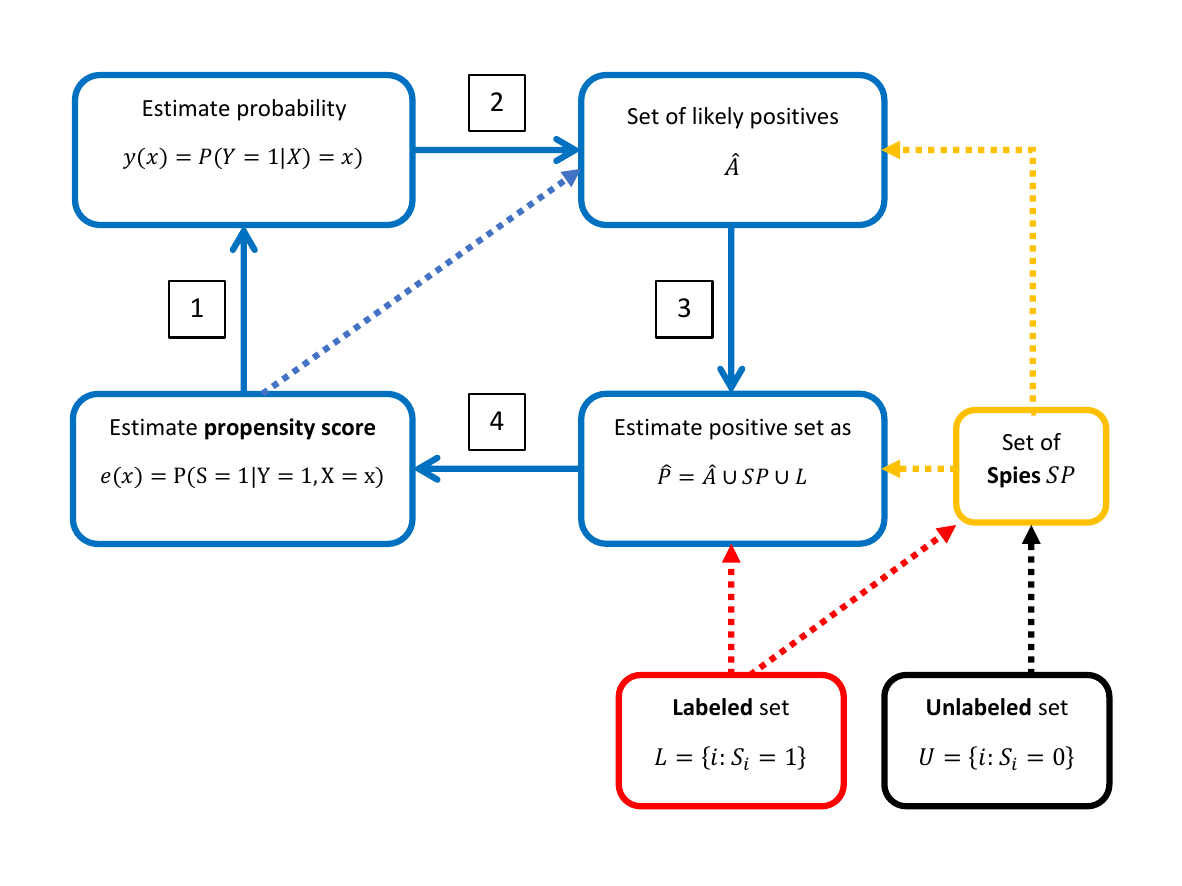} 
\caption{Flowchart of the proposed method JERM. (1) Given an estimator $e(x)$, we find $y(x)$ (line 5 in Algorithm 1). (2) Based on the estimators $y(x)$ and $e(x)$ and the set of spies, we determine the set of observations that are likely positive: $\hat{A}$. (3) The set of positive observations $P$ is approximated by  the sum of the sets $\hat{A}$, the set of spies and the set of positive observations. (4)  $\hat{P}$ is used for estimating $e(x)$ (line 9 in Algorithm 1).}
\label{Flowchart}
\end{figure}

\begin{figure}[ht!]
\includegraphics[width=0.9\textwidth]{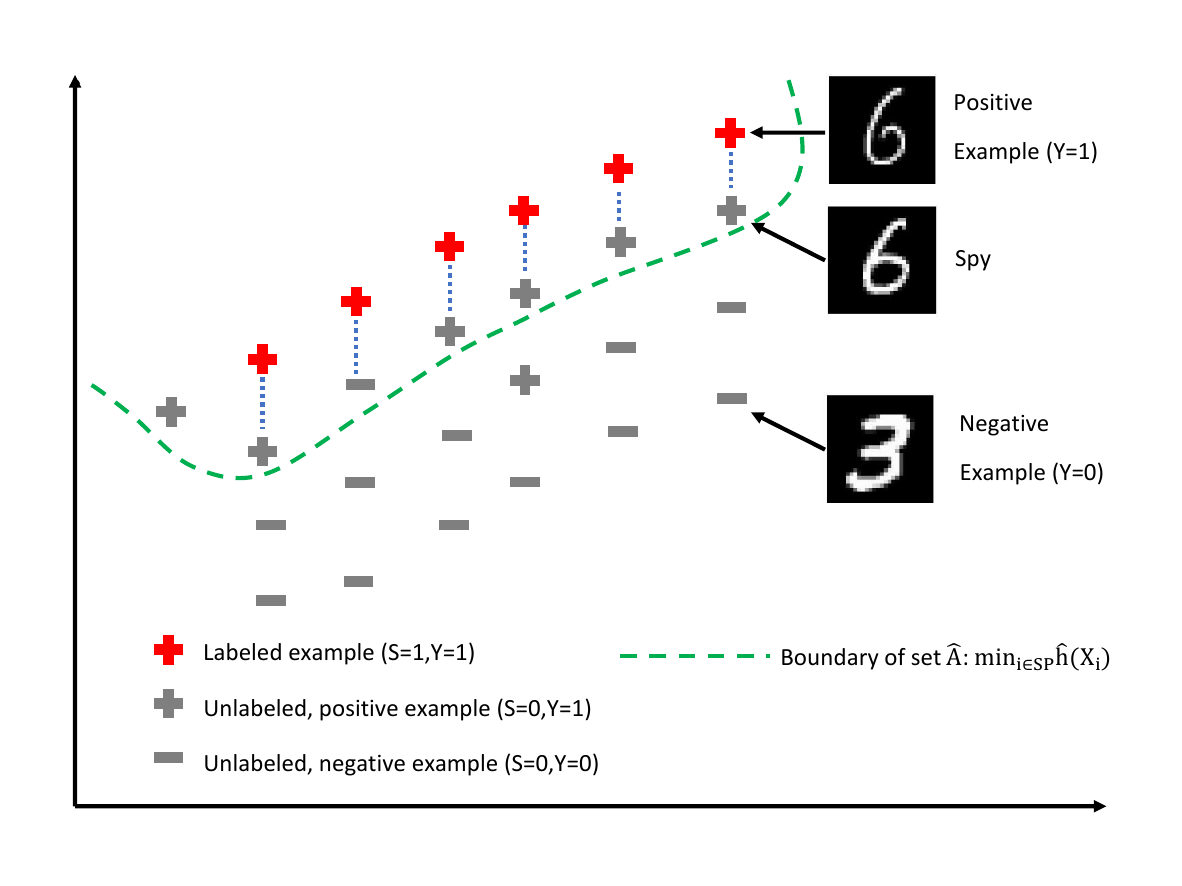} 
\caption{Visualization showing how to determine the set $\hat{P}$ based on spies.}
\label{Fig:spies}
\end{figure}

The above algorithm contains some technical details that call for comments. First, as $Q_n(\beta,\gamma)$ is not convex in either $\beta$ or $\gamma$,  Majorisation-Minimisation (MM) algorithm (see e.g. \cite{Hastie2015}, Section 5.8), commonly employed in such scenarios, is used to find the maximiser in line 5 in Algorithm \ref{alg-prop}. The analogous idea was used in previous papers on PU learning \cite{Lazeckaetal2021, FurmanczykECAI2023}, where it was shown that the use of the MM algorithm is superior to   standard gradient-based algorithms. Secondly, we need to initialize the $e(x)$ estimator (line 3 in Algorithm \ref{alg-prop}). We use a simple estimator $\hat{e}(x)=0.5(1+\hat{s}(x))$, where $\hat{s}(x)$ is estimated by fitting the naive model in which variable $S$ is treated as a class variable. The form of the estimator results from a simple inequality $s(x)= e(x)y(x)\leq e(x)\leq 1$ and considering the average value between $s(x)$ and $1$.

\section{Experiments}
\subsection{Methods and datasets}
We use the most related methods discussed in the previous sections: \textsc{PGlin} \cite{Gerych2022}, 
\textsc{LBE} \cite{Gong2021},
\textsc{SAR-EM} \cite{BekkerRobberechtsDavis2019} and
\textsc{TM} \cite{FurmanczykECAI2023}.
Such methods were chosen because we focus on methods operating under the SAR assumption and those that directly estimate $e(x)$. To make the comparison fair, in all methods the base classifier is a logistic regression (LR). The LR model was also used as the base model in \cite{Gong2021}, \cite{BekkerRobberechtsDavis2019} and \cite{FurmanczykECAI2023}.
The reference method is the \textsc{NAIVE} method, which uses the variable S as the unknown class  variable, i.e. it treats unlabeled observations as negative.
We also consider the \textsc{ORACLE} method, which assumes knowledge of the true variable $Y$. This method cannot be used in practice for PU data, but is useful in experiments because its result can be interpreted as yielding  an upper bound on accuracy.

The experiments were conducted on 20 datasets, including 16 tabular datasets from the UCI repository and 4 image datasets: CIFAR10, MNIST, Fashion and USPS. The characteristics of the sets are provided in Table \ref{Tab:datasets} in Appendix A.
Tabular datasets with multiple classes were transformed into binary classification datasets, such that the positive class includes the most common class, and the remaining classes are combined into the negative class. In the case of image datasets, we define the binary class variable depending on the particular dataset. In CIFAR10, all vehicles form a positive class and animals form a negative class. For MNIST, even and odd digits form the positive and negative classes, respectively. In the USPS positive  digits less than five constitute the positive class, and remaining digits constitute the negative class. In FashionMNIST, clothing items worn on the upper body are marked as positive cases, the remaining images are in the negative class.
For image data, we use a pre-trained  deep neural network Resnet18 to extract the feature vector. For each image, the feature vector of dimension $512$ , is an outocome  of the average pooling layer. Then, the extracted feature vector is used by PU models.
\subsection{Labelling strategies and evaluation methods}
Given a set related to a binary classification problem, we artificially generate PU data using various labeling strategies. We assign all negative observations to the unlabeled set. From  the positive observations we randomly select those that will be labeled with probability $e(x)=P(S=1|X=x,Y=1)$. We are considering the following strategies.
\begin{itemize}
\item [{\bf S1.}] Propensity score  $e(x)=c$. 
\item [{\bf S2.}] Propensity score  $e(x)=F_{\text{Logistic}}(x^{T}\beta^*+a)$.
\item [{\bf S3.}] Propensity score  $e(x)=F_{\text{Cauchy}}(x^{T}\beta^*+a)$. 
\item [{\bf S4.}] Propensity score $e(x)=[F_{\text{Logistic}}(x^{T}\beta^*+a)]^{10}$.
\end{itemize}
In the above formulas, $F_{\text{Logistic}}$ and $F_{\text{Cauchy}}$ denote the cumulative distribution functions of the logistic and Cauchy distributions, respectively.
Parameter $a$ is determined to control the value of labelling frequency $c=P(S=1|Y=1)$ and $\beta^*$ is computed using \textsc{ORACLE} method with logistic regression.
Strategy S1 corresponds to SCAR, whereas strategies S2-S4 correspond to SAR assumption.
Moreover, note that S2 is related to assumption (\ref{double}). The analysis of strategies S3 and S4 allows us to examine the robustness of the method to deviation from the assumption (\ref{double}). The S4 strategy was considered in \cite{Gong2021}.  Note that in this case  propensity score approximates 0-1 step function.

In the case of tabular datasets, we randomly split the data into a training set (75\% of observations) and a test set (25\% of observations). For image datasets, we use predefined splits provided in the PyTorch library \cite{PyTorch19}. From the training data, we generate PU data using the labeling strategies described above. Then, the  PU models are fitted on  training data. Finally, the models are evaluated on test data. We use balanced accuracy as the primary evaluation metric because this metric takes into account class imbalances that occur in some datasets. The above steps are repeated for 10 random data splits and finally we report the average values and standard deviations.

\subsection{Discussion}
The aim of the experiments was to address the following questions. (1) How do the considered  methods work for different labeling schemes? (2) Is the proposed method \textsc{JERM}  robust to deviations from model assumptions? (3) What is the effectiveness of the methods for different label frequencies?
\subsubsection{Analysis of performance for different labeling schemes}
Table \ref{tab:general} shows summary results of pairwise comparisons: wins (W), losses (L) and draws (D) of the proposed  method \textsc{JERM} against each competitive method in terms of
average balanced accuracy, whereas Tables \ref{tab:S1c03}-\ref{tab:S4c07} in Appendix show detailed results for S1-S4 and labeling frequency $c=0.3,0.5,0.7$.

When compared with the \textsc{NAIVE} method and \textsc{SAR-EM}, the  proposed method \textsc{JERM} is the clear winner for most datasets, regardless of labeling scheme and label frequency. Comparison with \textsc{LBE} shows, that for most datasets, the classification accuracy is comparable, however in the case of S4, the proposed method works significantly better for 3-5 datasets, while the opposite situation does not happen.  Note that the labeling  strategy (S4) originates from \cite{Gong2021}, where \textsc{LBE} was proposed.
\color{black}
Comparison with \textsc{PGLin} reveals that for the strategies S2-S4, the \textsc{JERM} method has significantly higher accuracy for a larger number of datasets. The effectiveness of the \textsc{TM} method depends on the labeling scheme. For S1, \textsc{TM} performs similarly (12-14 datasets) or better (5-7 datasets) than the \textsc{JERM} method. The situation changes for non-SCAR schemes (S2-S4), where for most datasets the \textsc{JERM}  works better or comparable to \textsc{TM}. Importantly, for S3 and S4, on no dataset \textsc{TM}  performs   better than the \textsc{JERM} method.

In addition, Table \ref{tab:general_simple} shows percentage of absolute wins (averaged over $c$) of the \textsc{JERM} method
against each competitive method. The proposed method \textsc{JERM} turns out to be a winner for most datasets and S2-S4 scenarios. In the case of S1, the \textsc{JERM} works better that the \textsc{NAIVE} and \textsc{EM} for most datasets.

The analysis of the average ranks for individual methods (last row in Tables 2-13) indicates that the \textsc{JERM} works on average most effectively for non-SCAR schemes and lower label frequencies ($c=0.3$ and $c=0.5$), while for $c=0.7$ is the second best. For the SCAR scheme S1, the \textsc{TM} method emerges as the winner.

\subsubsection{Robustness to deviations from model assumptions}

Note that our method, as well as \textsc{LBE}, is based on the assumption (\ref{double}) indicating that the propensity function depends on a linear combination of features through a sigmoidal activation function. The labeling schemes S1 and S2 satisfy  this assumption, in contrast to  S3 and  S4 scenarios. In particular, for S4, the propensity function approximates 0-1 step function and deviates significantly from the assumed sigmoid function. Importantly, the \textsc{JERM} achieves the highest averaged ranks for S3 and S4 and lower label frequencies ($c=0.3,0.5$). This points to the robustness of our method when  the propensity score does not follow logistic model. Additionally, for the S4 scheme, the \textsc{JERM}  performs significantly better (3-5 datasets) or comparable (15-17 datasets) to \textsc{LBE}, which may indicate that the proposed method is more resistant to deviation from assumptions than \textsc{LBE}.

\subsubsection{Performance for different labeling frequencies}
The analysis of the situation of low label frequency $c$ is particularly interesting, because in this situation, learning is usually based on small (or even extremely small) number of labelled instances.
Experiments show that for low label frequency, the proposed method is more effective than other competitors. The \textsc{JERM} method turns out to be the winner (in terms of averaged ranks) for S2-S4 and is the second best for S1. This is confirmed by more detailed analyzes carried out on selected datasets (Segment, Banknote, CIFAR10, Fashion), for which we studied the dependence between balanced accuracy and the label frequency $c$ (Figures \ref{Res_cplots1} and \ref{Res_cplots2}). We observe high accuracy of the proposed method even for small label frequency, such as $c = 0.1$. In the case of some datasets and scenarios (e.g. the Segment dataset and the S4 scheme), the advantage is very pronounced. As expected, the performance of all methods increases with $c$, approaching the effectiveness of the \textsc{ORACLE} method. For image datasets, the shapes of the accuracy curves corresponding to the \textsc{JERM} method and \textsc{LBE} are similar, which is understandable because both methods are based on a the same assumption (\ref{double}). Importantly, however, the accuracy curve for the proposed method usually dominates the accuracy curve for LBE.


\subsubsection{Computational issues}
 Finally, we  briefly discuss computational issues. All considered methods based on alternating model fitting for a posterior probability and propensity score function   (\textsc{EM, LBE, TM})  are quite computationally expensive. In the case of the proposed method, the computational cost is additionally increased by the use of the MM algorithm in optimization and determination of spies. However, despite this, the computation times are comparable to the most related \textsc{LBE} method. For example, for CIFAR10 data and scheme S1, the computation time for LBE is 133.01 secs, while the computation time for the proposed method is slightly less 120.23. In turn, for MNIST data and scheme S1, the computation time for \textsc{LBE} is 164.15 and is slightly smaller than the computation time for JERM which is 178.00 secs.

\begin{figure}[ht!]
\centering
$\begin{array}{cc}
\includegraphics[width=0.45\textwidth]{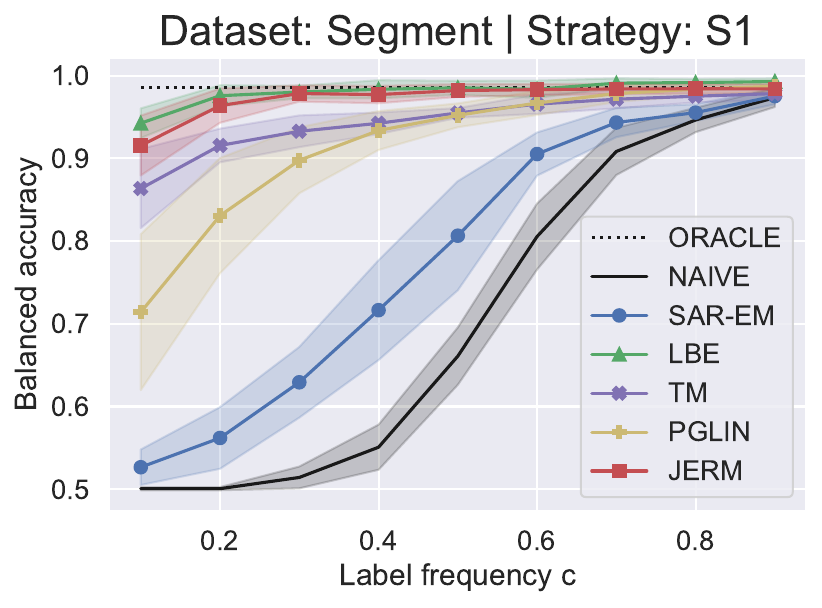} &
\includegraphics[width=0.45\textwidth]{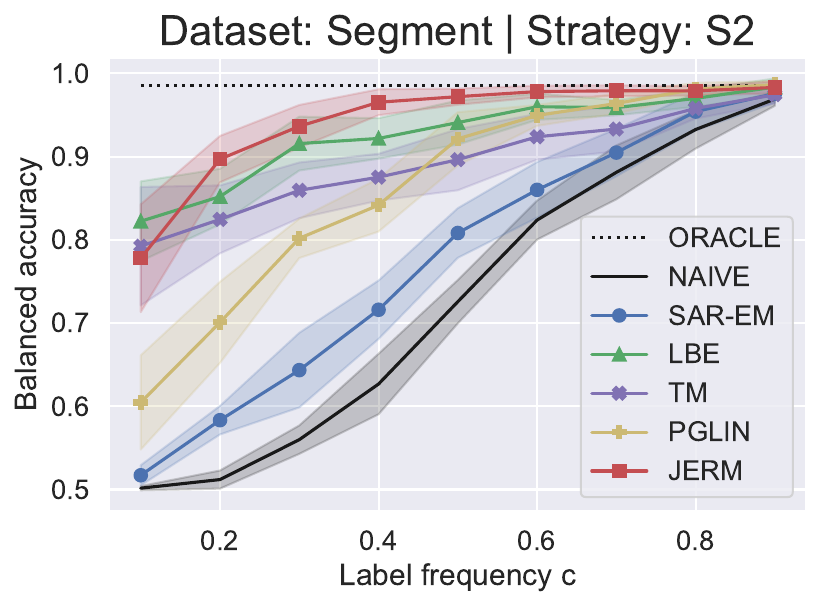} \\
\includegraphics[width=0.45\textwidth]{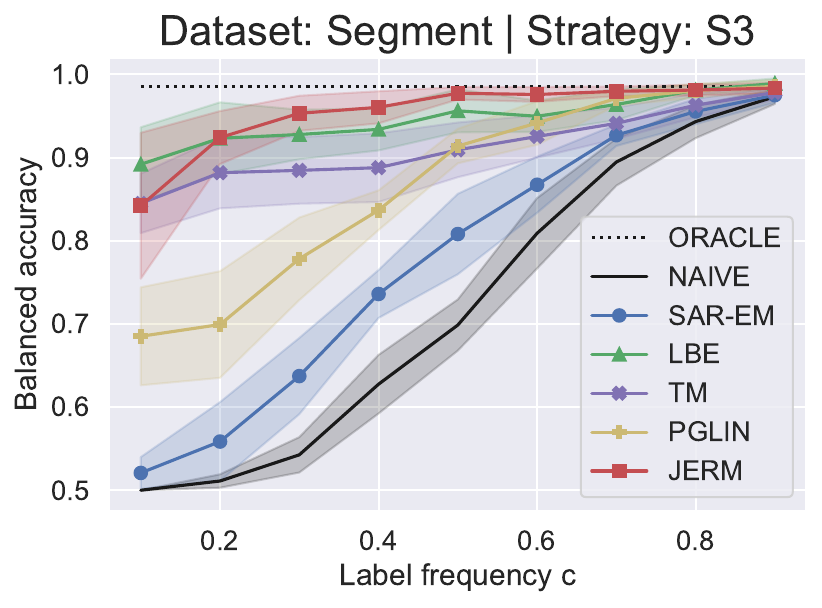}&
\includegraphics[width=0.45\textwidth]{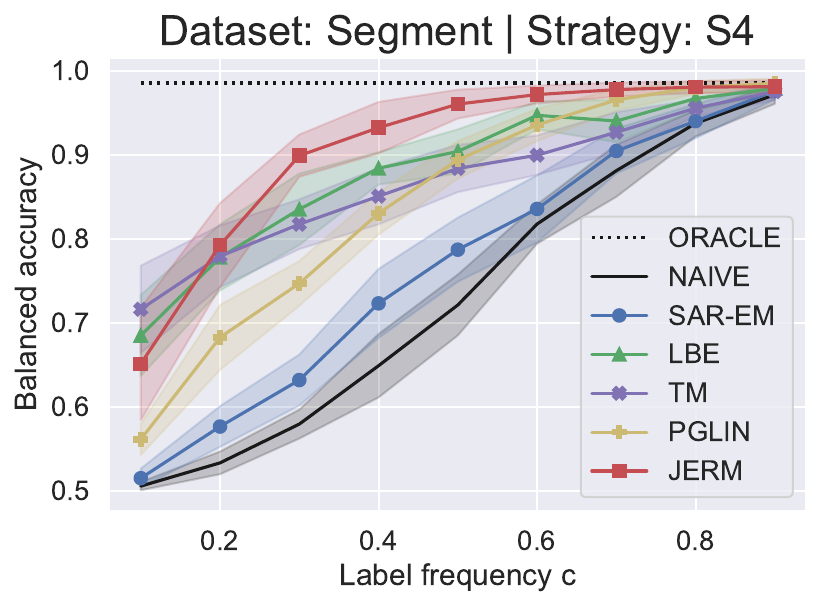} \\
\includegraphics[width=0.45\textwidth]{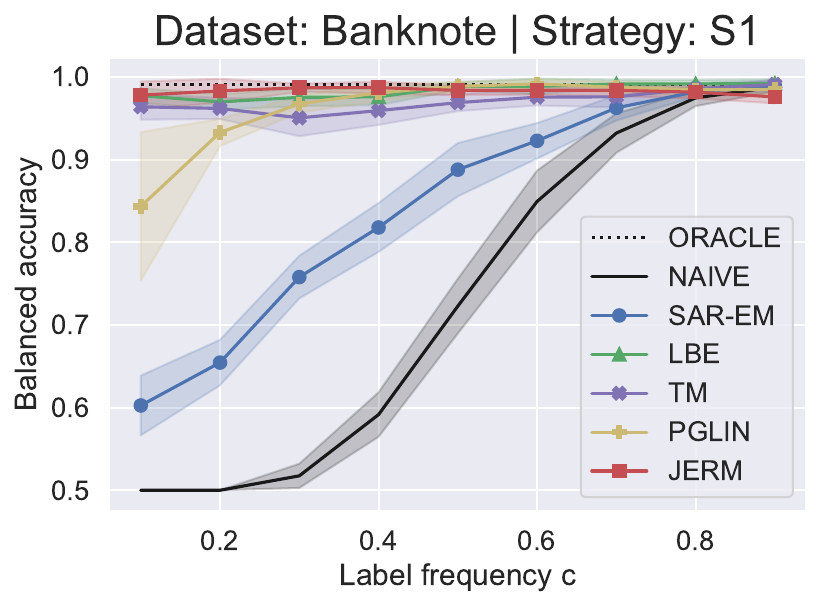} &
\includegraphics[width=0.45\textwidth]{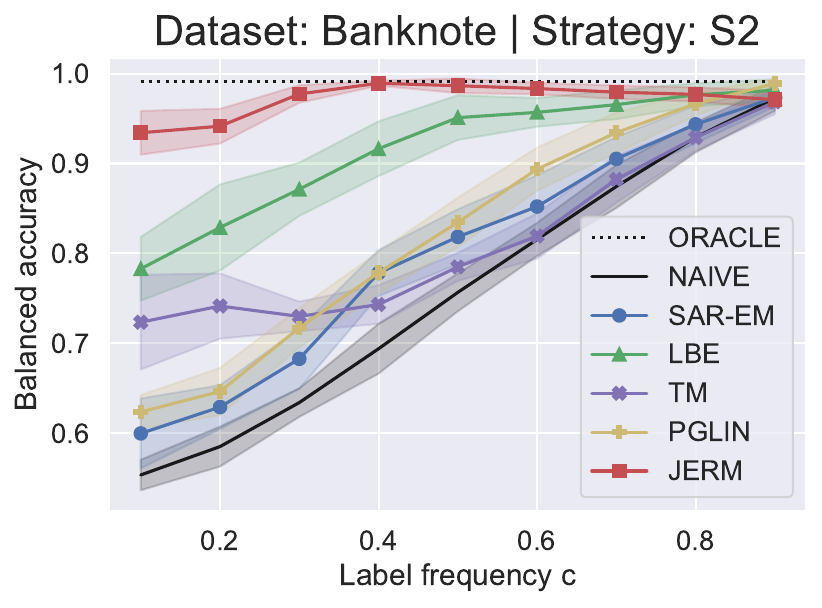} \\
\includegraphics[width=0.45\textwidth]{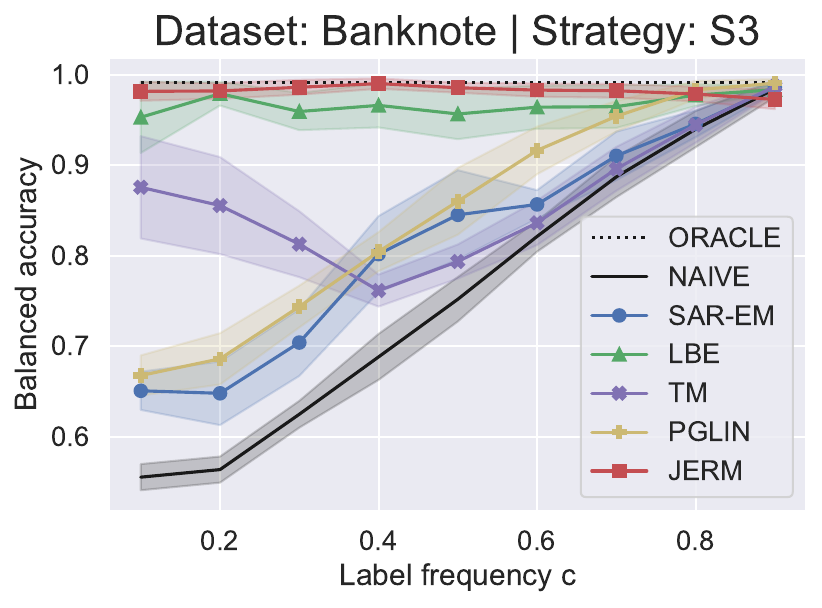}&
\includegraphics[width=0.45\textwidth]{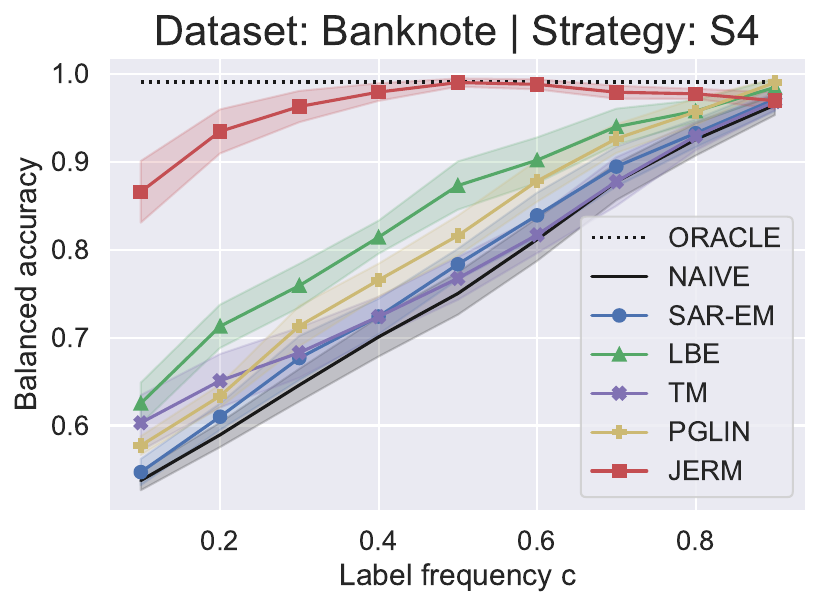} \\
\end{array}$
\caption{Balanced accuracy with respect to label frequency $c$ for selected tabular datasetes and  strategies S1-S4.}
\label{Res_cplots1}
\end{figure}

\begin{figure}[ht!]
\centering
$\begin{array}{cc}
\includegraphics[width=0.45\textwidth]{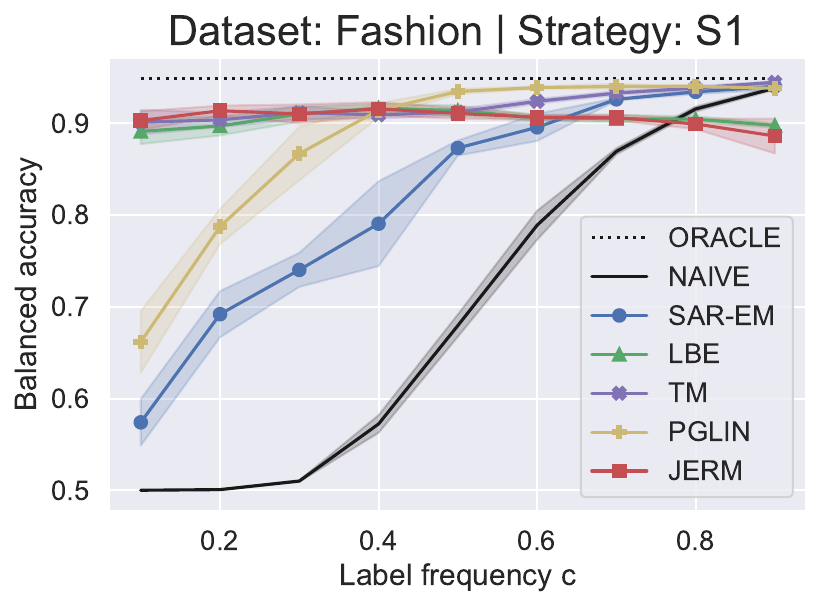} &
\includegraphics[width=0.45\textwidth]{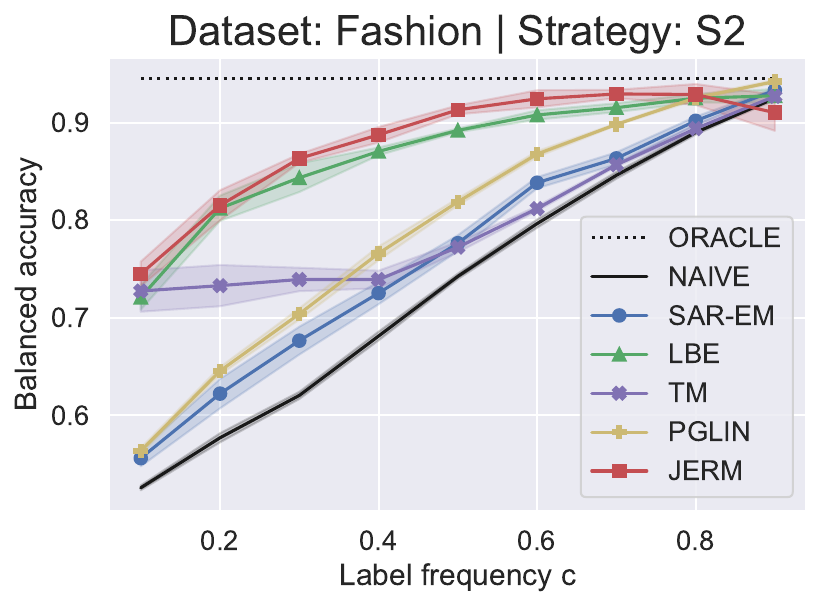} \\
\includegraphics[width=0.45\textwidth]{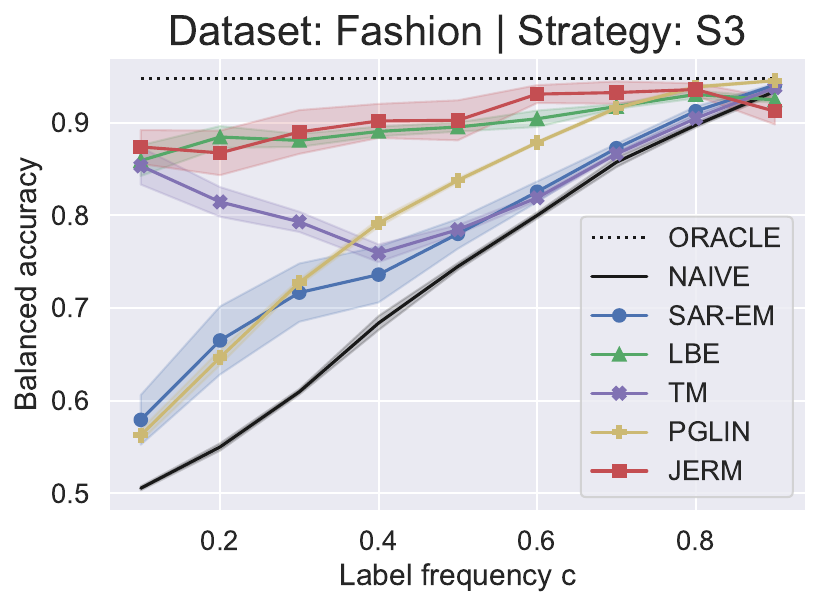}&
\includegraphics[width=0.45\textwidth]{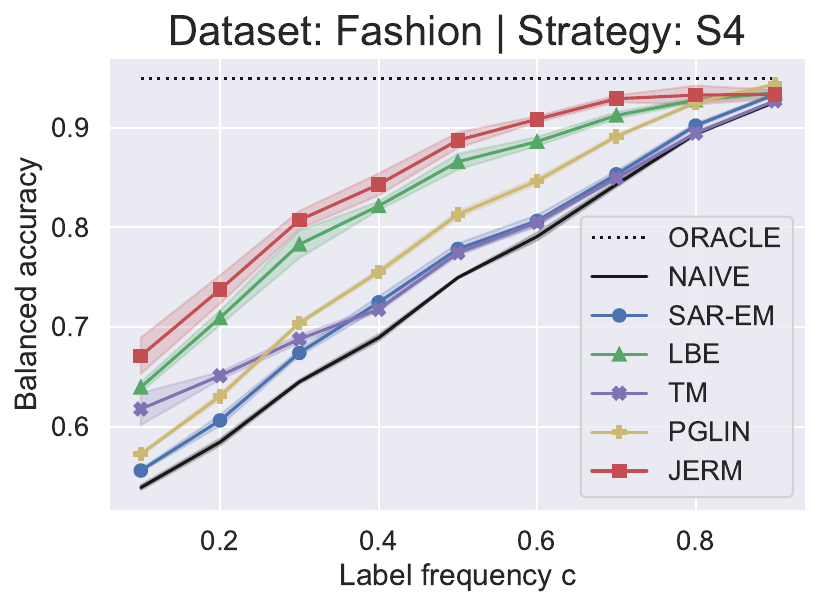} \\
\includegraphics[width=0.45\textwidth]{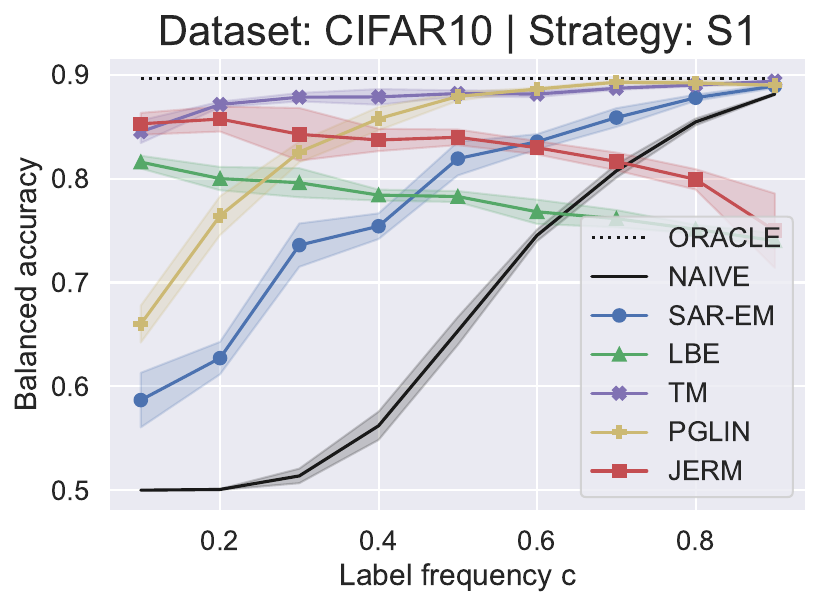} &
\includegraphics[width=0.45\textwidth]{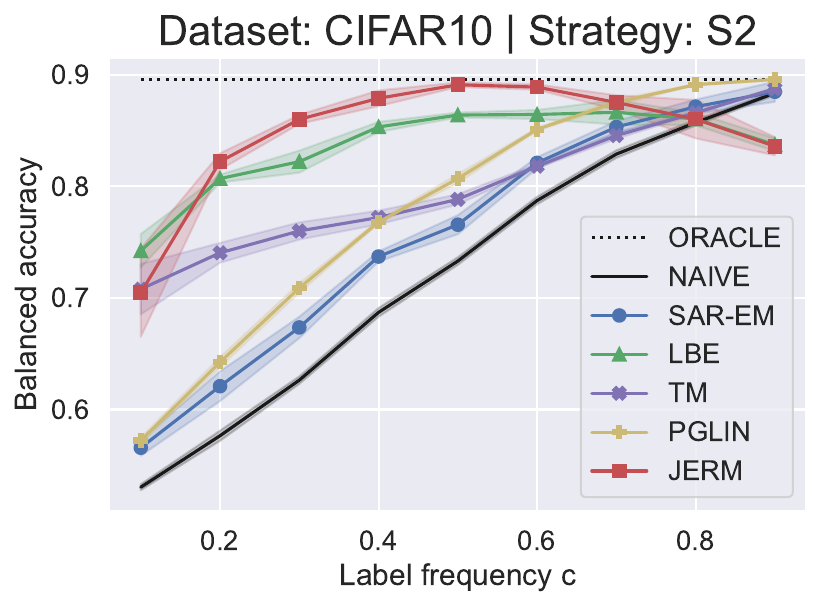} \\
\includegraphics[width=0.45\textwidth]{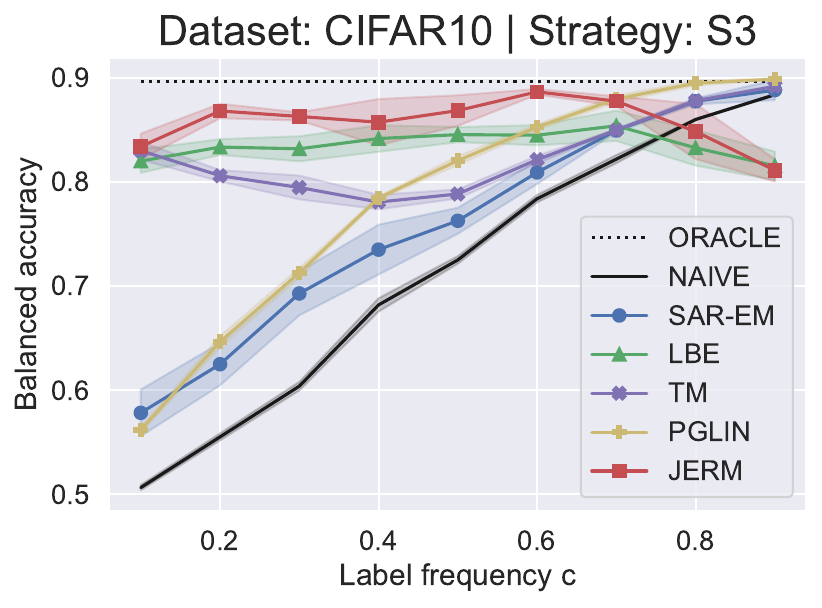}&
\includegraphics[width=0.45\textwidth]{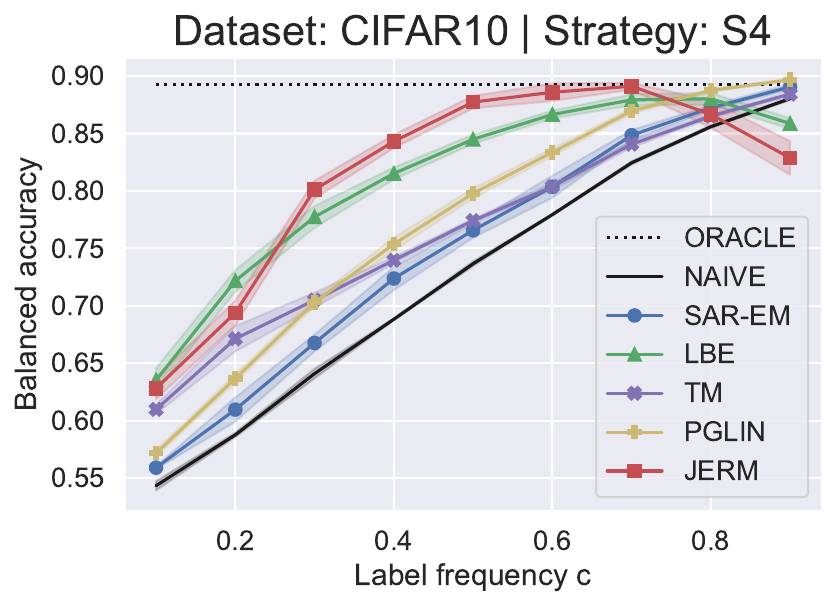} \\
\end{array}$
\caption{Balanced accuracy with respect to label frequency $c$ for selected image datasetes and  strategies S1-S4.}
\label{Res_cplots2}
\end{figure}

\begin{table}[ht!]
\centering
{\small
\caption{Wins (W),  losses (L) and draws (D) of the proposed method \textsc{JERM} 
against each competitive method in terms of average balanced accuracy.
Win/Loss  means that the difference in average accuracy is greater/smaller for the proposed method and intervals mean$\pm$sd  for the pair of compared methods do not overlap. A draw (D) means that the intervals mean$\pm$sd  for the pair of compared methods overlap.}
\label{tab:general}
\begin{tabular}{l|l|ccc|ccc|ccc|ccc|ccc}
\toprule 
&	&  & \textsc{NAIVE}  &  	&&	\textsc{EM} & & &	\textsc{LBE} & &	&	\textsc{PGLIN} & &	&	\textsc{TM}  & \\
Strategy&	c& W &  L & D	&W&	 L&  D&  W&	 L&  D&	 W&	 L&  D&	W&	 L &  D	\\
\midrule
&	0.3&	19&	0&	1&	17&	0&	3&	1&	1&	18&	8&	2&	10&	3&	5&	12\\
S1&	0.5&	18&	0&	2&	11&	0&	9&	2&	2&	16&	3&	6&	11&	1&	5&	14\\
&	0.7&	13&	2&	5&	8&	6&	6&	3&	2&	15&	1&	8&	11&	1&	7&	12\\
\midrule
&	0.3&	20&	0&	0&	18&	0&	2&	3&	1&	16&	13&	0&	7&	14&	0&	6\\
S2&	0.5&	18&	0&	2&	16&	0&	4&	4&	2&	14&	9&	1&	10&	14&	0&	6\\
&	0.7&	15&	0&	5&	11&	1&	8&	3&	1&	16&	5&	3&	12&	11&	1&	8\\
\midrule
&	0.3&	20&	0&	0&	18&	0&	2&	1&	2&	17&	13&	2&	5&	9&	0&	11\\
S3&	0.5&	17&	0&	3&	14&	0&	6&	2&	1&	17&	7&	2&	11&	9&	0&	11\\
&	0.7&	12&	0&	8&	10&	0&	10&	1&	1&	18&	5&	4&	11&	8&	0&	12\\
\midrule
&	0.3&	19&	0&	1&	18&	0&	2&	3&	0&	17&	11&	0&	9&	15&	0&	5\\
S4&	0.5&	17&	0&	3&	15&	0&	5&	5&	0&	15&	10&	1&	9&	13&	0&	7\\
&	0.7&	14&	0&	6&	10&	1&	9&	3&	0&	17&	5&	2&	13&	10&	0&	10\\
\bottomrule
\end{tabular}
}
\end{table}

\begin{table}[ht!]
\centering
{\small
\caption{Percentage of absolute wins (averaged over $c$) of the \textsc{JERM} method
against each competitive method.
}
\label{tab:general_simple}
\begin{tabular}{l|ccccc}
\toprule 
Strategy & \textsc{NAIVE}  &  \textsc{EM}  &	\textsc{LBE} &	\textsc{PGLIN} 	&	\textsc{TM}   \\
\midrule
 S1&  0.91 &0.8	& 0.42	&0.48	&0.42 \\
S2 & 0.95&	0.9&	0.63&	0.71&	0.87 \\
S3& 0.97&	0.87&	0.62&	0.65&	0.8\\
S4& 0.97&	0.9&	0.65&	0.77&	0.87\\
\bottomrule
\end{tabular}
}
\end{table}

\section{Conclusions}
In this work, we proposed a novel PU learning method JERM, based on joint modeling the posterior probability and the propensity score  using sigmoid functions which depend on the linear combination of features.
The parameters of the above functions are estimated by optimizing the joint risk function for the observed label indicator.
The risk function minimizer was analyzed theoretically. In particular, we established  risk consistency for the empirical risk minimizer.

We also proposed an iterative  method for optimizing joint empirical risk. The algorithm consists in alternately determining estimates for the posterior probability and the propensity score function. An innovative element of the work is the use of the spies technique to estimate the set of positive observations, which in turn enables the determination of the propensity score estimator.
Importantly, in future research, the proposed propensity score estimator can be combined with other PU methods that require knowledge of the propensity scores.
The results of experiments conducted on 20 real data sets indicate that the proposed method works comparable to or better than state-of-the-art methods based on propensity score estimation.
The advantage of our method is especially noticeable when the SCAR assumption is not met and the labeling frequency is low.
Moreover, we have shown that the method  is robust with respect to the form of labeling.
Interesting future research direction  is  investigation   of effectiveness of selecting active predictors under the considered scenario by augmenting  joint empirical risk considered here with  sparsity-inducing regularizers.



\bibliographystyle{elsarticle-num} 
\bibliography{References}

\clearpage

\appendix

\section{}
\setcounter{table}{0}

\begin{table}[ht!]
\centering
\caption{Statistics of the considered data sets.}
\label{Tab:datasets}
\begin{tabular}{l|lllll}
\toprule 
                Dataset    & n &   p & $P(Y=1)$ & positives & type\\
                \midrule
                  Breast-w &   699  &  9  &0.34 &    241.0 & tabular\\
                  Diabetes &   768  &  8  &0.35 &    268.0& tabular\\
                  Spambase &  4601  & 57  &0.39 &   1813.0& tabular\\
                      Wdbc &   569  & 30  &0.37 &    212.0& tabular\\
   Banknote &  1372  &  4  &0.44 &    610.0& tabular\\
             Heart &   270  & 13  &0.44 &    120.0& tabular\\
                Ionosphere &   351  & 34  &0.64 &    225.0& tabular\\
                     Sonar &   208  & 60  &0.53 &    111.0& tabular\\
                 Haberman  &  306   & 3  &0.26  &    81.0& tabular\\
                  Segment  & 2310   &19  &0.14  &   330.0& tabular\\
            Waveform  & 5000   &40  &0.34  &  1692.0& tabular\\
                    Yeast  & 1484   & 8  &0.31  &   463.0& tabular\\
                     Musk  & 6598  &166  &0.15  &  1017.0& tabular\\
                   Isolet  & 7797  &617  &0.04  &   300.0& tabular\\
                  Semeion  & 1593  &256  & 0.1  &   162.0& tabular\\
                  Vehicle  &  846  & 18  &0.26 &    218.0& tabular\\
\midrule
                  CIFAR10  & 50000   &  - & 0.4 & 20000   & images\\
                  MNIST  &  60000  & -  & 0.49&  29492  & images\\
                  USPS  &  7291  &  - &0.58 & 4240   & images\\
                  Fashion  & 60000   & -  & 0.5 & 30000    & images\\                  
\bottomrule
\end{tabular}
\end{table}

\clearpage

\subsection{Results for labeling strategy S1}

\begin{table}[h!]
\centering
{\tiny
\caption{Balanced accuracy for datasets, for labelling strategy {\bf S1} and $c=0.3$.}
\label{tab:S1c03}
\begin{tabular}{l|l|llllll}
\toprule 
Dataset &	\textsc{ORACLE} &	\textsc{NAIVE} &	\textsc{SAR-EM}  &	\textsc{LBE}  & \textsc{PGLIN}  & \textsc{TM} & \textsc{JERM} \\
\midrule
Breast-w &	0.952 $\pm$ 0.017 &	0.529 $\pm$ 0.022 &	0.74 $\pm$ 0.042 &	{\bf 0.906 $\pm$ 0.036 } &	0.868 $\pm$ 0.022 &	0.904 $\pm$ 0.016 &	0.889 $\pm$ 0.028 \\
Diabetes &	0.716 $\pm$ 0.026 &	0.501 $\pm$ 0.002 &	0.519 $\pm$ 0.017 &	0.636 $\pm$ 0.03 &	{\bf 0.717 $\pm$ 0.02 } &	0.687 $\pm$ 0.064 &	0.684 $\pm$ 0.026 \\
Spambase &	0.894 $\pm$ 0.009 &	0.512 $\pm$ 0.005 &	0.673 $\pm$ 0.017 &	0.863 $\pm$ 0.016 &	0.672 $\pm$ 0.035 &	{\bf 0.883 $\pm$ 0.008 } &	0.813 $\pm$ 0.031 \\
Wdbc &	0.971 $\pm$ 0.01 &	0.565 $\pm$ 0.036 &	0.693 $\pm$ 0.049 &	0.859 $\pm$ 0.035 &	0.735 $\pm$ 0.042 &	{\bf 0.904 $\pm$ 0.036 } &	0.753 $\pm$ 0.078 \\
Banknote &	0.991 $\pm$ 0.005 &	0.519 $\pm$ 0.009 &	0.759 $\pm$ 0.024 &	0.974 $\pm$ 0.01 &	0.959 $\pm$ 0.027 &	0.964 $\pm$ 0.016 &	{\bf 0.983 $\pm$ 0.007 } \\
Heart &	0.808 $\pm$ 0.038 &	0.512 $\pm$ 0.014 &	0.6 $\pm$ 0.07 &	0.683 $\pm$ 0.073 &	{\bf 0.774 $\pm$ 0.029 } &	0.771 $\pm$ 0.046 &	0.657 $\pm$ 0.052 \\
Ionosphere &	0.82 $\pm$ 0.022 &	0.5 $\pm$ 0.007 &	0.508 $\pm$ 0.048 &	0.664 $\pm$ 0.081 &	{\bf 0.792 $\pm$ 0.047 } &	0.784 $\pm$ 0.052 &	0.726 $\pm$ 0.05 \\
Sonar &	0.778 $\pm$ 0.051 &	0.502 $\pm$ 0.005 &	0.533 $\pm$ 0.045 &	{\bf 0.684 $\pm$ 0.082 } &	0.607 $\pm$ 0.069 &	0.616 $\pm$ 0.055 &	0.574 $\pm$ 0.093 \\
Haberman &	0.511 $\pm$ 0.019 &	0.5 $\pm$ 0.0 &	0.499 $\pm$ 0.003 &	0.532 $\pm$ 0.045 &	0.511 $\pm$ 0.019 &	0.502 $\pm$ 0.005 &	{\bf 0.573 $\pm$ 0.064 } \\
Segment &	0.985 $\pm$ 0.006 &	0.515 $\pm$ 0.013 &	0.614 $\pm$ 0.035 &	{\bf 0.98 $\pm$ 0.008 } &	0.861 $\pm$ 0.055 &	0.932 $\pm$ 0.023 &	0.979 $\pm$ 0.008 \\
Waveform &	0.841 $\pm$ 0.015 &	0.511 $\pm$ 0.004 &	0.691 $\pm$ 0.026 &	0.833 $\pm$ 0.009 &	0.729 $\pm$ 0.023 &	{\bf 0.84 $\pm$ 0.011 } &	0.82 $\pm$ 0.012 \\
Yeast &	0.518 $\pm$ 0.01 &	0.5 $\pm$ 0.0 &	0.501 $\pm$ 0.002 &	0.623 $\pm$ 0.021 &	0.536 $\pm$ 0.024 &	0.576 $\pm$ 0.032 &	{\bf 0.658 $\pm$ 0.019 } \\
Musk &	0.761 $\pm$ 0.012 &	0.512 $\pm$ 0.011 &	0.565 $\pm$ 0.018 &	0.83 $\pm$ 0.014 &	0.685 $\pm$ 0.022 &	0.805 $\pm$ 0.016 &	{\bf 0.833 $\pm$ 0.014 } \\
Isolet &	0.62 $\pm$ 0.03 &	0.5 $\pm$ 0.0 &	0.569 $\pm$ 0.032 &	{\bf 0.897 $\pm$ 0.009 } &	0.645 $\pm$ 0.035 &	0.85 $\pm$ 0.029 &	0.896 $\pm$ 0.027 \\
Semeion &	0.819 $\pm$ 0.029 &	0.504 $\pm$ 0.009 &	0.569 $\pm$ 0.052 &	{\bf 0.851 $\pm$ 0.039 } &	0.808 $\pm$ 0.049 &	0.851 $\pm$ 0.02 &	0.836 $\pm$ 0.022 \\
Vehicle &	0.937 $\pm$ 0.017 &	0.527 $\pm$ 0.021 &	0.671 $\pm$ 0.054 &	{\bf 0.904 $\pm$ 0.022 } &	0.799 $\pm$ 0.059 &	0.844 $\pm$ 0.033 &	0.878 $\pm$ 0.08 \\
CIFAR10 &	0.893 $\pm$ 0.0 &	0.512 $\pm$ 0.003 &	0.726 $\pm$ 0.031 &	0.808 $\pm$ 0.011 &	0.822 $\pm$ 0.018 &	{\bf 0.88 $\pm$ 0.002 } &	0.849 $\pm$ 0.007 \\
MNIST &	0.858 $\pm$ 0.0 &	0.51 $\pm$ 0.003 &	0.606 $\pm$ 0.019 &	0.787 $\pm$ 0.016 &	{\bf 0.795 $\pm$ 0.018 } &	0.749 $\pm$ 0.024 &	0.767 $\pm$ 0.007 \\
USPS &	0.877 $\pm$ 0.0 &	0.505 $\pm$ 0.0 &	0.666 $\pm$ 0.021 &	0.775 $\pm$ 0.005 &	0.812 $\pm$ 0.03 &	{\bf 0.83 $\pm$ 0.007 } &	0.778 $\pm$ 0.01 \\
Fashion &	0.948 $\pm$ 0.0 &	0.509 $\pm$ 0.003 &	0.766 $\pm$ 0.009 &	0.906 $\pm$ 0.009 &	0.872 $\pm$ 0.021 &	{\bf 0.913 $\pm$ 0.004 } &	0.907 $\pm$ 0.008 \\
\midrule
Avg. rank &	0.0 &	1.05 &	2.0 &	4.8 &	3.8 &	4.8 &	4.55 \\
\bottomrule
\end{tabular}
}
\end{table}

\begin{table}[h!]
\centering
{\tiny
\caption{Balanced accuracy for datasets, for labelling strategy {\bf S1} and $c=0.5$.}
\label{tab:S1c05}
\begin{tabular}{l|l|llllll}
\toprule 
Dataset &	\textsc{ORACLE} &	\textsc{NAIVE} &	\textsc{SAR-EM}  &	\textsc{LBE}  & \textsc{PGLIN}  & \textsc{TM} & \textsc{JERM} \\
\midrule
Breast-w &	0.952 $\pm$ 0.017 &	0.662 $\pm$ 0.043 &	0.851 $\pm$ 0.023 &	{\bf 0.949 $\pm$ 0.024 } &	0.909 $\pm$ 0.021 &	0.915 $\pm$ 0.024 &	0.948 $\pm$ 0.035 \\
Diabetes &	0.716 $\pm$ 0.026 &	0.51 $\pm$ 0.016 &	0.575 $\pm$ 0.027 &	0.639 $\pm$ 0.03 &	{\bf 0.732 $\pm$ 0.026 } &	0.715 $\pm$ 0.036 &	0.663 $\pm$ 0.027 \\
Spambase &	0.894 $\pm$ 0.009 &	0.605 $\pm$ 0.016 &	0.756 $\pm$ 0.017 &	0.858 $\pm$ 0.014 &	0.843 $\pm$ 0.02 &	{\bf 0.889 $\pm$ 0.01 } &	0.778 $\pm$ 0.013 \\
Wdbc &	0.971 $\pm$ 0.01 &	0.686 $\pm$ 0.054 &	0.789 $\pm$ 0.066 &	0.903 $\pm$ 0.04 &	0.873 $\pm$ 0.03 &	{\bf 0.918 $\pm$ 0.029 } &	0.863 $\pm$ 0.037 \\
Banknote &	0.991 $\pm$ 0.005 &	0.736 $\pm$ 0.038 &	0.884 $\pm$ 0.019 &	{\bf 0.989 $\pm$ 0.007 } &	0.986 $\pm$ 0.01 &	0.97 $\pm$ 0.016 &	0.984 $\pm$ 0.006 \\
Heart &	0.808 $\pm$ 0.038 &	0.618 $\pm$ 0.047 &	0.717 $\pm$ 0.042 &	0.682 $\pm$ 0.066 &	0.77 $\pm$ 0.04 &	{\bf 0.783 $\pm$ 0.042 } &	0.684 $\pm$ 0.083 \\
Ionosphere &	0.82 $\pm$ 0.022 &	0.558 $\pm$ 0.046 &	0.721 $\pm$ 0.067 &	0.69 $\pm$ 0.046 &	0.794 $\pm$ 0.03 &	{\bf 0.818 $\pm$ 0.051 } &	0.733 $\pm$ 0.038 \\
Sonar &	0.778 $\pm$ 0.051 &	0.534 $\pm$ 0.041 &	0.643 $\pm$ 0.069 &	{\bf 0.713 $\pm$ 0.056 } &	0.642 $\pm$ 0.063 &	0.687 $\pm$ 0.055 &	0.616 $\pm$ 0.072 \\
Haberman &	0.511 $\pm$ 0.019 &	0.5 $\pm$ 0.0 &	0.51 $\pm$ 0.018 &	0.541 $\pm$ 0.029 &	0.513 $\pm$ 0.035 &	0.508 $\pm$ 0.028 &	{\bf 0.563 $\pm$ 0.052 } \\
Segment &	0.985 $\pm$ 0.006 &	0.648 $\pm$ 0.038 &	0.816 $\pm$ 0.058 &	{\bf 0.989 $\pm$ 0.006 } &	0.951 $\pm$ 0.017 &	0.954 $\pm$ 0.014 &	0.982 $\pm$ 0.009 \\
Waveform &	0.841 $\pm$ 0.015 &	0.584 $\pm$ 0.017 &	0.74 $\pm$ 0.03 &	0.835 $\pm$ 0.006 &	0.825 $\pm$ 0.013 &	{\bf 0.846 $\pm$ 0.007 } &	0.817 $\pm$ 0.011 \\
Yeast &	0.518 $\pm$ 0.01 &	0.5 $\pm$ 0.0 &	0.511 $\pm$ 0.006 &	0.619 $\pm$ 0.014 &	0.588 $\pm$ 0.044 &	0.622 $\pm$ 0.015 &	{\bf 0.651 $\pm$ 0.021 } \\
Musk &	0.761 $\pm$ 0.012 &	0.55 $\pm$ 0.01 &	0.623 $\pm$ 0.023 &	0.799 $\pm$ 0.009 &	0.734 $\pm$ 0.021 &	0.811 $\pm$ 0.019 &	{\bf 0.821 $\pm$ 0.013 } \\
Isolet &	0.62 $\pm$ 0.03 &	0.516 $\pm$ 0.01 &	0.609 $\pm$ 0.032 &	0.899 $\pm$ 0.012 &	0.685 $\pm$ 0.024 &	0.862 $\pm$ 0.031 &	{\bf 0.909 $\pm$ 0.017 } \\
Semeion &	0.819 $\pm$ 0.029 &	0.555 $\pm$ 0.042 &	0.637 $\pm$ 0.026 &	{\bf 0.89 $\pm$ 0.027 } &	0.835 $\pm$ 0.018 &	0.842 $\pm$ 0.024 &	0.86 $\pm$ 0.016 \\
Vehicle &	0.937 $\pm$ 0.017 &	0.614 $\pm$ 0.033 &	0.776 $\pm$ 0.052 &	{\bf 0.928 $\pm$ 0.014 } &	0.901 $\pm$ 0.027 &	0.877 $\pm$ 0.023 &	0.907 $\pm$ 0.05 \\
CIFAR10 &	0.893 $\pm$ 0.0 &	0.655 $\pm$ 0.012 &	0.809 $\pm$ 0.008 &	0.775 $\pm$ 0.012 &	0.875 $\pm$ 0.003 &	{\bf 0.879 $\pm$ 0.004 } &	0.824 $\pm$ 0.007 \\
MNIST &	0.858 $\pm$ 0.0 &	0.625 $\pm$ 0.01 &	0.738 $\pm$ 0.021 &	0.765 $\pm$ 0.01 &	{\bf 0.847 $\pm$ 0.002 } &	0.832 $\pm$ 0.003 &	0.755 $\pm$ 0.005 \\
USPS &	0.877 $\pm$ 0.0 &	0.645 $\pm$ 0.008 &	0.791 $\pm$ 0.014 &	0.772 $\pm$ 0.009 &	{\bf 0.864 $\pm$ 0.002 } &	0.849 $\pm$ 0.005 &	0.78 $\pm$ 0.011 \\
Fashion &	0.948 $\pm$ 0.0 &	0.675 $\pm$ 0.008 &	0.861 $\pm$ 0.017 &	0.921 $\pm$ 0.005 &	{\bf 0.932 $\pm$ 0.002 } &	0.914 $\pm$ 0.003 &	0.905 $\pm$ 0.004 \\
\midrule
Avg. rank &	0.0 &	1.0 &	2.45 &	4.45 &	4.25 &	4.7 &	4.15 \\
\bottomrule
\end{tabular}
}
\end{table}

\begin{table}[h!]
\centering
{\tiny
\caption{Balanced accuracy for datasets, for labelling strategy {\bf S1} and $c=0.7$.}
\label{tab:S1c07}
\begin{tabular}{l|l|lllllll}
\toprule 
Dataset &	\textsc{ORACLE} &	\textsc{NAIVE} &	\textsc{SAR-EM}  &	\textsc{LBE}  & \textsc{PGLIN}  & \textsc{TM} & \textsc{JERM} \\
\midrule
Breast-w &	0.952 $\pm$ 0.017 &	0.846 $\pm$ 0.014 &	0.9 $\pm$ 0.017 &	{\bf 0.959 $\pm$ 0.022 } &	0.94 $\pm$ 0.017 &	0.919 $\pm$ 0.023 &	0.95 $\pm$ 0.016 \\
Diabetes &	0.716 $\pm$ 0.026 &	0.581 $\pm$ 0.021 &	0.675 $\pm$ 0.029 &	0.634 $\pm$ 0.025 &	{\bf 0.75 $\pm$ 0.011 } &	0.719 $\pm$ 0.036 &	0.665 $\pm$ 0.03 \\
Spambase &	0.894 $\pm$ 0.009 &	0.757 $\pm$ 0.021 &	0.83 $\pm$ 0.012 &	0.846 $\pm$ 0.01 &	{\bf 0.895 $\pm$ 0.006 } &	0.895 $\pm$ 0.008 &	0.795 $\pm$ 0.022 \\
Wdbc &	0.971 $\pm$ 0.01 &	0.818 $\pm$ 0.048 &	0.855 $\pm$ 0.039 &	0.919 $\pm$ 0.019 &	0.923 $\pm$ 0.028 &	{\bf 0.938 $\pm$ 0.02 } &	0.872 $\pm$ 0.059 \\
Banknote &	0.991 $\pm$ 0.005 &	0.935 $\pm$ 0.024 &	0.958 $\pm$ 0.016 &	{\bf 0.991 $\pm$ 0.006 } &	0.988 $\pm$ 0.005 &	0.98 $\pm$ 0.014 &	0.985 $\pm$ 0.006 \\
Heart &	0.808 $\pm$ 0.038 &	0.753 $\pm$ 0.038 &	0.791 $\pm$ 0.031 &	0.704 $\pm$ 0.064 &	{\bf 0.804 $\pm$ 0.04 } &	0.792 $\pm$ 0.04 &	0.714 $\pm$ 0.034 \\
Ionosphere &	0.82 $\pm$ 0.022 &	0.801 $\pm$ 0.045 &	{\bf 0.831 $\pm$ 0.034 } &	0.719 $\pm$ 0.063 &	0.794 $\pm$ 0.041 &	0.826 $\pm$ 0.036 &	0.766 $\pm$ 0.037 \\
Sonar &	0.778 $\pm$ 0.051 &	0.653 $\pm$ 0.055 &	0.693 $\pm$ 0.065 &	0.722 $\pm$ 0.084 &	0.649 $\pm$ 0.044 &	{\bf 0.74 $\pm$ 0.034 } &	0.641 $\pm$ 0.067 \\
Haberman &	0.511 $\pm$ 0.019 &	0.499 $\pm$ 0.003 &	0.515 $\pm$ 0.025 &	0.51 $\pm$ 0.012 &	0.509 $\pm$ 0.015 &	0.514 $\pm$ 0.034 &	{\bf 0.552 $\pm$ 0.028 } \\
Segment &	0.985 $\pm$ 0.006 &	0.906 $\pm$ 0.04 &	0.934 $\pm$ 0.016 &	{\bf 0.992 $\pm$ 0.004 } &	0.98 $\pm$ 0.009 &	0.972 $\pm$ 0.009 &	0.984 $\pm$ 0.006 \\
Waveform &	0.841 $\pm$ 0.015 &	0.713 $\pm$ 0.014 &	0.782 $\pm$ 0.012 &	0.836 $\pm$ 0.006 &	{\bf 0.865 $\pm$ 0.01 } &	0.851 $\pm$ 0.012 &	0.808 $\pm$ 0.014 \\
Yeast &	0.518 $\pm$ 0.01 &	0.501 $\pm$ 0.002 &	0.526 $\pm$ 0.018 &	0.604 $\pm$ 0.022 &	0.599 $\pm$ 0.041 &	0.627 $\pm$ 0.015 &	{\bf 0.642 $\pm$ 0.015 } \\
Musk &	0.761 $\pm$ 0.012 &	0.638 $\pm$ 0.016 &	0.695 $\pm$ 0.018 &	0.789 $\pm$ 0.007 &	0.786 $\pm$ 0.016 &	{\bf 0.816 $\pm$ 0.013 } &	0.81 $\pm$ 0.021 \\
Isolet &	0.62 $\pm$ 0.03 &	0.547 $\pm$ 0.015 &	0.647 $\pm$ 0.029 &	0.899 $\pm$ 0.01 &	0.728 $\pm$ 0.033 &	0.864 $\pm$ 0.025 &	{\bf 0.908 $\pm$ 0.01 } \\
Semeion &	0.819 $\pm$ 0.029 &	0.656 $\pm$ 0.077 &	0.763 $\pm$ 0.043 &	{\bf 0.885 $\pm$ 0.023 } &	0.844 $\pm$ 0.019 &	0.855 $\pm$ 0.027 &	0.869 $\pm$ 0.018 \\
Vehicle &	0.937 $\pm$ 0.017 &	0.792 $\pm$ 0.069 &	0.885 $\pm$ 0.034 &	0.93 $\pm$ 0.012 &	{\bf 0.943 $\pm$ 0.015 } &	0.899 $\pm$ 0.019 &	0.915 $\pm$ 0.029 \\
CIFAR10 &	0.893 $\pm$ 0.0 &	0.799 $\pm$ 0.003 &	0.856 $\pm$ 0.006 &	0.768 $\pm$ 0.009 &	{\bf 0.89 $\pm$ 0.001 } &	0.882 $\pm$ 0.004 &	0.817 $\pm$ 0.004 \\
MNIST &	0.858 $\pm$ 0.0 &	0.77 $\pm$ 0.008 &	0.819 $\pm$ 0.004 &	0.754 $\pm$ 0.005 &	0.839 $\pm$ 0.002 &	{\bf 0.848 $\pm$ 0.004 } &	0.739 $\pm$ 0.013 \\
USPS &	0.877 $\pm$ 0.0 &	0.821 $\pm$ 0.006 &	0.846 $\pm$ 0.007 &	0.764 $\pm$ 0.006 &	{\bf 0.86 $\pm$ 0.005 } &	0.859 $\pm$ 0.003 &	0.754 $\pm$ 0.014 \\
Fashion &	0.948 $\pm$ 0.0 &	0.869 $\pm$ 0.005 &	0.923 $\pm$ 0.002 &	0.909 $\pm$ 0.004 &	{\bf 0.941 $\pm$ 0.001 } &	0.932 $\pm$ 0.003 &	0.9 $\pm$ 0.006 \\
\midrule
Avg. rank &	0.0 &	1.6 &	3.1 &	3.7 &	4.5 &	4.65 &	3.45 \\
\bottomrule
\end{tabular}
}
\end{table}

\clearpage

\subsection{Results for labeling strategy S2}

\begin{table}[h!]
\centering
{\tiny
\caption{Balanced accuracy for  datasets, for labelling strategy {\bf S2} and $c=0.3$.}
\label{tab:S2c03}
\begin{tabular}{l|l|llllll}
\toprule 
Dataset &	\textsc{ORACLE} &	\textsc{NAIVE} &	\textsc{SAR-EM}  &	\textsc{LBE}  & \textsc{PGLIN}  & \textsc{TM} & \textsc{JERM} \\
\midrule
Breast-w &	0.952 $\pm$ 0.017 &	0.597 $\pm$ 0.034 &	0.639 $\pm$ 0.03 &	{\bf 0.837 $\pm$ 0.051 } &	0.717 $\pm$ 0.025 &	0.728 $\pm$ 0.042 &	0.822 $\pm$ 0.045 \\
Diabetes &	0.716 $\pm$ 0.026 &	0.505 $\pm$ 0.006 &	0.552 $\pm$ 0.017 &	0.706 $\pm$ 0.044 &	0.692 $\pm$ 0.031 &	0.684 $\pm$ 0.046 &	{\bf 0.723 $\pm$ 0.031 } \\
Spambase &	0.893 $\pm$ 0.01 &	0.636 $\pm$ 0.015 &	0.648 $\pm$ 0.021 &	{\bf 0.785 $\pm$ 0.022 } &	0.677 $\pm$ 0.014 &	0.667 $\pm$ 0.015 &	0.784 $\pm$ 0.034 \\
Wdbc &	0.971 $\pm$ 0.01 &	0.653 $\pm$ 0.033 &	0.674 $\pm$ 0.031 &	0.705 $\pm$ 0.044 &	0.704 $\pm$ 0.043 &	0.68 $\pm$ 0.034 &	{\bf 0.764 $\pm$ 0.031 } \\
Banknote &	0.991 $\pm$ 0.005 &	0.633 $\pm$ 0.023 &	0.688 $\pm$ 0.023 &	0.895 $\pm$ 0.055 &	0.726 $\pm$ 0.02 &	0.718 $\pm$ 0.032 &	{\bf 0.977 $\pm$ 0.007 } \\
Heart &	0.808 $\pm$ 0.038 &	0.562 $\pm$ 0.039 &	0.623 $\pm$ 0.025 &	0.703 $\pm$ 0.075 &	{\bf 0.726 $\pm$ 0.047 } &	0.679 $\pm$ 0.058 &	0.682 $\pm$ 0.052 \\
Ionosphere &	0.82 $\pm$ 0.022 &	0.507 $\pm$ 0.013 &	0.545 $\pm$ 0.036 &	0.739 $\pm$ 0.034 &	{\bf 0.767 $\pm$ 0.05 } &	0.711 $\pm$ 0.052 &	0.717 $\pm$ 0.059 \\
Sonar &	0.782 $\pm$ 0.049 &	0.502 $\pm$ 0.007 &	0.568 $\pm$ 0.042 &	0.673 $\pm$ 0.074 &	{\bf 0.683 $\pm$ 0.073 } &	0.583 $\pm$ 0.044 &	0.64 $\pm$ 0.041 \\
Haberman &	0.511 $\pm$ 0.019 &	0.5 $\pm$ 0.0 &	0.507 $\pm$ 0.017 &	{\bf 0.58 $\pm$ 0.06 } &	0.497 $\pm$ 0.039 &	0.499 $\pm$ 0.019 &	0.575 $\pm$ 0.038 \\
Segment &	0.985 $\pm$ 0.006 &	0.556 $\pm$ 0.015 &	0.65 $\pm$ 0.037 &	0.93 $\pm$ 0.04 &	0.774 $\pm$ 0.037 &	0.855 $\pm$ 0.035 &	{\bf 0.933 $\pm$ 0.022 } \\
Waveform &	0.839 $\pm$ 0.014 &	0.566 $\pm$ 0.009 &	0.631 $\pm$ 0.031 &	0.826 $\pm$ 0.017 &	0.666 $\pm$ 0.016 &	0.771 $\pm$ 0.016 &	{\bf 0.85 $\pm$ 0.014 } \\
Yeast &	0.518 $\pm$ 0.01 &	0.5 $\pm$ 0.0 &	0.504 $\pm$ 0.004 &	0.65 $\pm$ 0.016 &	0.632 $\pm$ 0.043 &	0.61 $\pm$ 0.019 &	{\bf 0.69 $\pm$ 0.017 } \\
Musk &	0.764 $\pm$ 0.014 &	0.564 $\pm$ 0.015 &	0.61 $\pm$ 0.024 &	{\bf 0.827 $\pm$ 0.022 } &	0.651 $\pm$ 0.022 &	0.743 $\pm$ 0.024 &	0.797 $\pm$ 0.014 \\
Isolet &	0.603 $\pm$ 0.023 &	0.511 $\pm$ 0.006 &	0.542 $\pm$ 0.01 &	{\bf 0.862 $\pm$ 0.023 } &	0.588 $\pm$ 0.027 &	0.775 $\pm$ 0.039 &	0.814 $\pm$ 0.039 \\
Semeion &	0.787 $\pm$ 0.041 &	0.548 $\pm$ 0.068 &	0.673 $\pm$ 0.03 &	0.765 $\pm$ 0.023 &	0.748 $\pm$ 0.033 &	0.747 $\pm$ 0.036 &	{\bf 0.796 $\pm$ 0.036 } \\
Vehicle &	0.937 $\pm$ 0.017 &	0.576 $\pm$ 0.032 &	0.639 $\pm$ 0.058 &	0.835 $\pm$ 0.036 &	0.707 $\pm$ 0.042 &	0.733 $\pm$ 0.046 &	{\bf 0.85 $\pm$ 0.052 } \\
CIFAR10 &	0.893 $\pm$ 0.0 &	0.633 $\pm$ 0.002 &	0.686 $\pm$ 0.013 &	0.833 $\pm$ 0.011 &	0.708 $\pm$ 0.004 &	0.757 $\pm$ 0.01 &	{\bf 0.855 $\pm$ 0.014 } \\
MNIST &	0.853 $\pm$ 0.0 &	0.609 $\pm$ 0.007 &	0.671 $\pm$ 0.016 &	{\bf 0.813 $\pm$ 0.007 } &	0.7 $\pm$ 0.004 &	0.732 $\pm$ 0.008 &	0.794 $\pm$ 0.015 \\
USPS &	0.879 $\pm$ 0.0 &	0.622 $\pm$ 0.001 &	0.679 $\pm$ 0.009 &	{\bf 0.843 $\pm$ 0.006 } &	0.706 $\pm$ 0.007 &	0.709 $\pm$ 0.004 &	0.804 $\pm$ 0.015 \\
Fashion &	0.948 $\pm$ 0.0 &	0.621 $\pm$ 0.003 &	0.683 $\pm$ 0.012 &	0.846 $\pm$ 0.011 &	0.711 $\pm$ 0.004 &	0.722 $\pm$ 0.009 &	{\bf 0.866 $\pm$ 0.008 } \\
\midrule
Avg. rank &	0.0 &	1.1 &	2.1 &	5.35 &	3.65 &	3.45 &	5.35 \\
\bottomrule
\end{tabular}
}
\end{table}

\begin{table}[h!]
\centering
{\tiny
\caption{Balanced accuracy for  datasets, for labelling strategy {\bf S2} and $c=0.5$.}
\label{tab:S2c05}
\begin{tabular}{l|l|llllll}
\toprule 
Dataset &	\textsc{ORACLE} &	\textsc{NAIVE} &	\textsc{SAR-EM}  &	\textsc{LBE}  & \textsc{PGLIN}  & \textsc{TM} & \textsc{JERM} \\
\midrule
Breast-w &	0.952 $\pm$ 0.017 &	0.722 $\pm$ 0.035 &	0.772 $\pm$ 0.038 &	0.886 $\pm$ 0.034 &	0.849 $\pm$ 0.035 &	0.781 $\pm$ 0.031 &	{\bf 0.902 $\pm$ 0.028 } \\
Diabetes &	0.716 $\pm$ 0.026 &	0.56 $\pm$ 0.02 &	0.626 $\pm$ 0.028 &	0.662 $\pm$ 0.038 &	{\bf 0.722 $\pm$ 0.031 } &	0.685 $\pm$ 0.035 &	0.707 $\pm$ 0.045 \\
Spambase &	0.893 $\pm$ 0.01 &	0.726 $\pm$ 0.016 &	0.748 $\pm$ 0.013 &	{\bf 0.871 $\pm$ 0.009 } &	0.771 $\pm$ 0.015 &	0.748 $\pm$ 0.017 &	0.84 $\pm$ 0.07 \\
Wdbc &	0.971 $\pm$ 0.01 &	0.751 $\pm$ 0.054 &	0.783 $\pm$ 0.033 &	0.789 $\pm$ 0.036 &	0.805 $\pm$ 0.038 &	0.76 $\pm$ 0.037 &	{\bf 0.85 $\pm$ 0.04 } \\
Banknote&	0.991 $\pm$ 0.005 &	0.753 $\pm$ 0.024 &	0.789 $\pm$ 0.026 &	0.929 $\pm$ 0.039 &	0.831 $\pm$ 0.021 &	0.782 $\pm$ 0.022 &	{\bf 0.99 $\pm$ 0.007 } \\
Heart &	0.808 $\pm$ 0.038 &	0.683 $\pm$ 0.04 &	0.696 $\pm$ 0.045 &	0.731 $\pm$ 0.037 &	{\bf 0.777 $\pm$ 0.025 } &	0.728 $\pm$ 0.043 &	0.707 $\pm$ 0.038 \\
Ionosphere &	0.82 $\pm$ 0.022 &	0.62 $\pm$ 0.063 &	0.704 $\pm$ 0.053 &	0.748 $\pm$ 0.062 &	{\bf 0.82 $\pm$ 0.043 } &	0.773 $\pm$ 0.049 &	0.757 $\pm$ 0.043 \\
Sonar &	0.782 $\pm$ 0.049 &	0.56 $\pm$ 0.033 &	0.65 $\pm$ 0.065 &	0.705 $\pm$ 0.053 &	{\bf 0.726 $\pm$ 0.06 } &	0.652 $\pm$ 0.064 &	0.614 $\pm$ 0.061 \\
Haberman &	0.511 $\pm$ 0.019 &	0.5 $\pm$ 0.0 &	0.509 $\pm$ 0.012 &	0.536 $\pm$ 0.032 &	0.528 $\pm$ 0.048 &	0.519 $\pm$ 0.033 &	{\bf 0.569 $\pm$ 0.046 } \\
Segment &	0.985 $\pm$ 0.006 &	0.714 $\pm$ 0.031 &	0.808 $\pm$ 0.043 &	0.942 $\pm$ 0.025 &	0.921 $\pm$ 0.024 &	0.915 $\pm$ 0.031 &	{\bf 0.968 $\pm$ 0.012 } \\
Waveform &	0.839 $\pm$ 0.014 &	0.65 $\pm$ 0.012 &	0.717 $\pm$ 0.021 &	0.845 $\pm$ 0.014 &	0.764 $\pm$ 0.02 &	0.782 $\pm$ 0.016 &	{\bf 0.85 $\pm$ 0.012 } \\
Yeast &	0.518 $\pm$ 0.01 &	0.501 $\pm$ 0.002 &	0.516 $\pm$ 0.007 &	0.619 $\pm$ 0.017 &	0.654 $\pm$ 0.024 &	0.625 $\pm$ 0.012 &	{\bf 0.675 $\pm$ 0.014 } \\
Musk &	0.764 $\pm$ 0.014 &	0.648 $\pm$ 0.018 &	0.684 $\pm$ 0.022 &	0.845 $\pm$ 0.018 &	0.754 $\pm$ 0.018 &	0.774 $\pm$ 0.022 &	{\bf 0.852 $\pm$ 0.016 } \\
Isolet &	0.603 $\pm$ 0.023 &	0.542 $\pm$ 0.016 &	0.587 $\pm$ 0.021 &	{\bf 0.892 $\pm$ 0.015 } &	0.652 $\pm$ 0.028 &	0.807 $\pm$ 0.035 &	0.871 $\pm$ 0.026 \\
Semeion &	0.787 $\pm$ 0.041 &	0.69 $\pm$ 0.033 &	0.734 $\pm$ 0.047 &	0.806 $\pm$ 0.024 &	0.778 $\pm$ 0.04 &	0.793 $\pm$ 0.026 &	{\bf 0.817 $\pm$ 0.035 } \\
Vehicle &	0.937 $\pm$ 0.017 &	0.677 $\pm$ 0.042 &	0.778 $\pm$ 0.052 &	0.863 $\pm$ 0.028 &	0.834 $\pm$ 0.063 &	0.777 $\pm$ 0.04 &	{\bf 0.919 $\pm$ 0.038 } \\
CIFAR10 &	0.893 $\pm$ 0.0 &	0.735 $\pm$ 0.003 &	0.778 $\pm$ 0.015 &	0.858 $\pm$ 0.007 &	0.807 $\pm$ 0.005 &	0.786 $\pm$ 0.01 &	{\bf 0.886 $\pm$ 0.007 } \\
MNIST &	0.853 $\pm$ 0.0 &	0.699 $\pm$ 0.003 &	0.733 $\pm$ 0.005 &	{\bf 0.816 $\pm$ 0.009 } &	0.796 $\pm$ 0.002 &	0.758 $\pm$ 0.006 &	0.788 $\pm$ 0.018 \\
USPS &	0.879 $\pm$ 0.0 &	0.72 $\pm$ 0.005 &	0.761 $\pm$ 0.011 &	{\bf 0.85 $\pm$ 0.009 } &	0.8 $\pm$ 0.003 &	0.743 $\pm$ 0.003 &	0.825 $\pm$ 0.016 \\
Fashion &	0.948 $\pm$ 0.0 &	0.747 $\pm$ 0.003 &	0.768 $\pm$ 0.007 &	0.895 $\pm$ 0.008 &	0.824 $\pm$ 0.006 &	0.779 $\pm$ 0.007 &	{\bf 0.908 $\pm$ 0.005 } \\
\midrule
Avg. rank &	0.0 &	1.0 &	2.25 &	4.85 &	4.35 &	3.3 &	5.25 \\
\bottomrule
\end{tabular}
}
\end{table}

\begin{table}[h!]
\centering
{\tiny
\caption{Balanced accuracy for  datasets, for labelling strategy {\bf S2} and $c=0.7$.}
\label{tab:S2c07}
\begin{tabular}{l|l|llllll}
\toprule 
Dataset &	\textsc{ORACLE} &	\textsc{NAIVE} &	\textsc{SAR-EM}  &	\textsc{LBE}  & \textsc{PGLIN}  & \textsc{TM} & \textsc{JERM} \\
\midrule
Breast-w &	0.952 $\pm$ 0.017 &	0.843 $\pm$ 0.032 &	0.87 $\pm$ 0.02 &	0.921 $\pm$ 0.027 &	0.912 $\pm$ 0.019 &	0.86 $\pm$ 0.027 &	{\bf 0.923 $\pm$ 0.019 } \\
Diabetes &	0.716 $\pm$ 0.026 &	0.626 $\pm$ 0.018 &	0.701 $\pm$ 0.033 &	0.647 $\pm$ 0.026 &	{\bf 0.756 $\pm$ 0.018 } &	0.703 $\pm$ 0.033 &	0.685 $\pm$ 0.036 \\
Spambase &	0.893 $\pm$ 0.01 &	0.808 $\pm$ 0.018 &	0.829 $\pm$ 0.015 &	{\bf 0.887 $\pm$ 0.016 } &	0.867 $\pm$ 0.014 &	0.83 $\pm$ 0.014 &	0.881 $\pm$ 0.011 \\
Wdbc &	0.971 $\pm$ 0.01 &	0.86 $\pm$ 0.03 &	0.878 $\pm$ 0.032 &	0.88 $\pm$ 0.035 &	{\bf 0.912 $\pm$ 0.028 } &	0.847 $\pm$ 0.035 &	0.899 $\pm$ 0.039 \\
Banknote &	0.991 $\pm$ 0.005 &	0.879 $\pm$ 0.023 &	0.91 $\pm$ 0.027 &	0.965 $\pm$ 0.021 &	0.93 $\pm$ 0.022 &	0.882 $\pm$ 0.026 &	{\bf 0.981 $\pm$ 0.006 } \\
Heart &	0.808 $\pm$ 0.038 &	0.764 $\pm$ 0.039 &	0.786 $\pm$ 0.036 &	0.754 $\pm$ 0.032 &	{\bf 0.805 $\pm$ 0.04 } &	0.767 $\pm$ 0.031 &	0.743 $\pm$ 0.061 \\
Ionosphere &	0.82 $\pm$ 0.022 &	0.798 $\pm$ 0.03 &	0.805 $\pm$ 0.029 &	0.754 $\pm$ 0.029 &	{\bf 0.827 $\pm$ 0.018 } &	0.814 $\pm$ 0.038 &	0.783 $\pm$ 0.049 \\
Sonar &	0.782 $\pm$ 0.049 &	0.644 $\pm$ 0.04 &	0.705 $\pm$ 0.038 &	{\bf 0.733 $\pm$ 0.061 } &	0.688 $\pm$ 0.05 &	0.728 $\pm$ 0.081 &	0.659 $\pm$ 0.067 \\
Haberman &	0.511 $\pm$ 0.019 &	0.505 $\pm$ 0.012 &	0.526 $\pm$ 0.028 &	0.523 $\pm$ 0.036 &	0.528 $\pm$ 0.051 &	0.513 $\pm$ 0.022 &	{\bf 0.585 $\pm$ 0.029 } \\
Segment &	0.985 $\pm$ 0.006 &	0.881 $\pm$ 0.033 &	0.907 $\pm$ 0.025 &	0.967 $\pm$ 0.013 &	0.969 $\pm$ 0.01 &	0.952 $\pm$ 0.016 &	{\bf 0.98 $\pm$ 0.007 } \\
Waveform &	0.839 $\pm$ 0.014 &	0.732 $\pm$ 0.013 &	0.781 $\pm$ 0.015 &	{\bf 0.849 $\pm$ 0.007 } &	0.831 $\pm$ 0.019 &	0.806 $\pm$ 0.012 &	0.839 $\pm$ 0.009 \\
Yeast &	0.518 $\pm$ 0.01 &	0.506 $\pm$ 0.006 &	0.545 $\pm$ 0.018 &	0.617 $\pm$ 0.015 &	{\bf 0.658 $\pm$ 0.029 } &	0.63 $\pm$ 0.019 &	0.652 $\pm$ 0.011 \\
Musk &	0.764 $\pm$ 0.014 &	0.722 $\pm$ 0.015 &	0.754 $\pm$ 0.017 &	0.836 $\pm$ 0.009 &	0.796 $\pm$ 0.014 &	0.789 $\pm$ 0.016 &	{\bf 0.863 $\pm$ 0.01 } \\
Isolet &	0.603 $\pm$ 0.023 &	0.566 $\pm$ 0.023 &	0.628 $\pm$ 0.028 &	{\bf 0.905 $\pm$ 0.013 } &	0.715 $\pm$ 0.034 &	0.821 $\pm$ 0.029 &	0.889 $\pm$ 0.022 \\
Semeion &	0.787 $\pm$ 0.041 &	0.733 $\pm$ 0.037 &	0.783 $\pm$ 0.035 &	0.837 $\pm$ 0.037 &	0.82 $\pm$ 0.03 &	0.809 $\pm$ 0.033 &	{\bf 0.849 $\pm$ 0.034 } \\
Vehicle &	0.937 $\pm$ 0.017 &	0.811 $\pm$ 0.043 &	0.875 $\pm$ 0.039 &	{\bf 0.933 $\pm$ 0.03 } &	0.916 $\pm$ 0.022 &	0.87 $\pm$ 0.042 &	0.911 $\pm$ 0.036 \\
CIFAR10 &	0.893 $\pm$ 0.0 &	0.828 $\pm$ 0.003 &	0.853 $\pm$ 0.005 &	0.871 $\pm$ 0.005 &	0.874 $\pm$ 0.002 &	0.845 $\pm$ 0.002 &	{\bf 0.885 $\pm$ 0.003 } \\
MNIST &	0.853 $\pm$ 0.0 &	0.783 $\pm$ 0.002 &	0.814 $\pm$ 0.004 &	0.807 $\pm$ 0.007 &	{\bf 0.843 $\pm$ 0.003 } &	0.809 $\pm$ 0.002 &	0.753 $\pm$ 0.033 \\
USPS &	0.879 $\pm$ 0.0 &	0.812 $\pm$ 0.002 &	0.831 $\pm$ 0.009 &	0.838 $\pm$ 0.011 &	{\bf 0.861 $\pm$ 0.003 } &	0.812 $\pm$ 0.003 &	0.84 $\pm$ 0.013 \\
Fashion &	0.948 $\pm$ 0.0 &	0.849 $\pm$ 0.002 &	0.87 $\pm$ 0.005 &	0.918 $\pm$ 0.002 &	0.902 $\pm$ 0.004 &	0.854 $\pm$ 0.001 &	{\bf 0.925 $\pm$ 0.01 } \\
\midrule
Avg. rank &	0.0 &	1.3 &	3.05 &	4.25 &	4.8 &	3.05 &	4.55 \\
\bottomrule
\end{tabular}
}
\end{table}

\clearpage

\subsection{Results for labeling strategy S3}

\begin{table}[h!]
\centering
{\tiny
\caption{Balanced accuracy for  datasets, for labelling strategy {\bf S3} and $c=0.3$.}
\label{tab:S3c03}
\begin{tabular}{l|l|llllll}
\toprule 
Dataset &	\textsc{ORACLE} &	\textsc{NAIVE} &	\textsc{SAR-EM}  &	\textsc{LBE}  & \textsc{PGLIN}  & \textsc{TM} & \textsc{JERM} \\
\midrule
Breast-w &	0.952 $\pm$ 0.017 &	0.578 $\pm$ 0.022 &	0.645 $\pm$ 0.057 &	{\bf 0.868 $\pm$ 0.04 } &	0.725 $\pm$ 0.049 &	0.769 $\pm$ 0.037 &	0.846 $\pm$ 0.044 \\
Diabetes &	0.716 $\pm$ 0.026 &	0.504 $\pm$ 0.004 &	0.549 $\pm$ 0.028 &	0.675 $\pm$ 0.054 &	{\bf 0.705 $\pm$ 0.039 } &	0.665 $\pm$ 0.041 &	0.697 $\pm$ 0.034 \\
Spambase &	0.892 $\pm$ 0.01 &	0.619 $\pm$ 0.011 &	0.645 $\pm$ 0.029 &	{\bf 0.86 $\pm$ 0.019 } &	0.683 $\pm$ 0.012 &	0.709 $\pm$ 0.023 &	0.834 $\pm$ 0.064 \\
Wdbc &	0.971 $\pm$ 0.01 &	0.656 $\pm$ 0.026 &	0.688 $\pm$ 0.03 &	0.77 $\pm$ 0.04 &	0.722 $\pm$ 0.029 &	0.707 $\pm$ 0.027 &	{\bf 0.77 $\pm$ 0.042 } \\
Banknote &	0.991 $\pm$ 0.005 &	0.61 $\pm$ 0.021 &	0.707 $\pm$ 0.023 &	0.959 $\pm$ 0.024 &	0.742 $\pm$ 0.023 &	0.815 $\pm$ 0.053 &	{\bf 0.982 $\pm$ 0.011 } \\
Heart &	0.808 $\pm$ 0.038 &	0.554 $\pm$ 0.022 &	0.616 $\pm$ 0.038 &	0.713 $\pm$ 0.036 &	{\bf 0.726 $\pm$ 0.054 } &	0.69 $\pm$ 0.048 &	0.7 $\pm$ 0.088 \\
Ionosphere &	0.82 $\pm$ 0.022 &	0.527 $\pm$ 0.031 &	0.547 $\pm$ 0.023 &	0.739 $\pm$ 0.041 &	{\bf 0.801 $\pm$ 0.038 } &	0.717 $\pm$ 0.063 &	0.667 $\pm$ 0.061 \\
Sonar &	0.782 $\pm$ 0.05 &	0.507 $\pm$ 0.012 &	0.548 $\pm$ 0.038 &	0.66 $\pm$ 0.049 &	{\bf 0.73 $\pm$ 0.067 } &	0.67 $\pm$ 0.047 &	0.601 $\pm$ 0.053 \\
Haberman &	0.511 $\pm$ 0.019 &	0.5 $\pm$ 0.0 &	0.509 $\pm$ 0.017 &	0.583 $\pm$ 0.064 &	0.535 $\pm$ 0.042 &	0.506 $\pm$ 0.033 &	{\bf 0.607 $\pm$ 0.065 } \\
Segment &	0.985 $\pm$ 0.006 &	0.544 $\pm$ 0.024 &	0.671 $\pm$ 0.049 &	0.931 $\pm$ 0.038 &	0.78 $\pm$ 0.042 &	0.889 $\pm$ 0.025 &	{\bf 0.953 $\pm$ 0.029 } \\
Waveform &	0.837 $\pm$ 0.016 &	0.557 $\pm$ 0.007 &	0.668 $\pm$ 0.027 &	0.823 $\pm$ 0.009 &	0.666 $\pm$ 0.014 &	0.788 $\pm$ 0.019 &	{\bf 0.844 $\pm$ 0.032 } \\
Yeast &	0.518 $\pm$ 0.01 &	0.5 $\pm$ 0.0 &	0.504 $\pm$ 0.005 &	0.646 $\pm$ 0.019 &	0.608 $\pm$ 0.046 &	0.619 $\pm$ 0.022 &	{\bf 0.672 $\pm$ 0.017 } \\
Musk &	0.753 $\pm$ 0.014 &	0.574 $\pm$ 0.012 &	0.608 $\pm$ 0.019 &	{\bf 0.839 $\pm$ 0.016 } &	0.632 $\pm$ 0.02 &	0.783 $\pm$ 0.022 &	0.828 $\pm$ 0.014 \\
Isolet &	0.625 $\pm$ 0.03 &	0.506 $\pm$ 0.007 &	0.535 $\pm$ 0.02 &	{\bf 0.855 $\pm$ 0.036 } &	0.622 $\pm$ 0.026 &	0.81 $\pm$ 0.039 &	0.844 $\pm$ 0.027 \\
Semeion &	0.804 $\pm$ 0.031 &	0.507 $\pm$ 0.015 &	0.594 $\pm$ 0.059 &	0.793 $\pm$ 0.055 &	0.782 $\pm$ 0.043 &	{\bf 0.805 $\pm$ 0.041 } &	0.8 $\pm$ 0.049 \\
Vehicle &	0.937 $\pm$ 0.017 &	0.572 $\pm$ 0.032 &	0.676 $\pm$ 0.056 &	0.845 $\pm$ 0.041 &	0.696 $\pm$ 0.038 &	0.789 $\pm$ 0.036 &	{\bf 0.859 $\pm$ 0.041 } \\
CIFAR10 &	0.893 $\pm$ 0.0 &	0.604 $\pm$ 0.006 &	0.694 $\pm$ 0.023 &	0.833 $\pm$ 0.006 &	0.709 $\pm$ 0.008 &	0.793 $\pm$ 0.013 &	{\bf 0.865 $\pm$ 0.009 } \\
MNIST &	0.851 $\pm$ 0.0 &	0.594 $\pm$ 0.004 &	0.649 $\pm$ 0.015 &	{\bf 0.798 $\pm$ 0.009 } &	0.681 $\pm$ 0.004 &	0.745 $\pm$ 0.011 &	0.76 $\pm$ 0.018 \\
USPS &	0.884 $\pm$ 0.0 &	0.599 $\pm$ 0.006 &	0.668 $\pm$ 0.028 &	{\bf 0.825 $\pm$ 0.01 } &	0.708 $\pm$ 0.003 &	0.738 $\pm$ 0.005 &	0.75 $\pm$ 0.024 \\
Fashion &	0.945 $\pm$ 0.0 &	0.613 $\pm$ 0.004 &	0.7 $\pm$ 0.008 &	0.884 $\pm$ 0.008 &	0.718 $\pm$ 0.008 &	0.775 $\pm$ 0.008 &	{\bf 0.892 $\pm$ 0.017 } \\
\midrule
Avg. rank &	0.0 &	1.0 &	2.1 &	5.15 &	3.65 &	3.9 &	5.2 \\
\bottomrule
\end{tabular}
}
\end{table}

\begin{table}[h!]
\centering
{\tiny
\caption{Balanced accuracy for  datasets, for labelling strategy {\bf S3} and $c=0.5$.}
\label{tab:S3c05}
\begin{tabular}{l|l|llllll}
\toprule 
Dataset &	\textsc{ORACLE} &	\textsc{NAIVE} &	\textsc{SAR-EM}  &	\textsc{LBE}  & \textsc{PGLIN}  & \textsc{TM} & \textsc{JERM} \\
\midrule
Breast-w &	0.952 $\pm$ 0.017 &	0.719 $\pm$ 0.042 &	0.777 $\pm$ 0.036 &	0.886 $\pm$ 0.028 &	0.846 $\pm$ 0.036 &	0.8 $\pm$ 0.04 &	{\bf 0.905 $\pm$ 0.039 } \\
Diabetes &	0.716 $\pm$ 0.026 &	0.565 $\pm$ 0.014 &	0.628 $\pm$ 0.023 &	0.655 $\pm$ 0.029 &	{\bf 0.72 $\pm$ 0.037 } &	0.692 $\pm$ 0.034 &	0.706 $\pm$ 0.031 \\
Spambase &	0.892 $\pm$ 0.01 &	0.718 $\pm$ 0.014 &	0.736 $\pm$ 0.018 &	{\bf 0.875 $\pm$ 0.014 } &	0.777 $\pm$ 0.017 &	0.75 $\pm$ 0.013 &	0.864 $\pm$ 0.086 \\
Wdbc &	0.971 $\pm$ 0.01 &	0.761 $\pm$ 0.033 &	0.782 $\pm$ 0.035 &	0.838 $\pm$ 0.032 &	0.813 $\pm$ 0.041 &	0.795 $\pm$ 0.031 &	{\bf 0.846 $\pm$ 0.024 } \\
Banknote &	0.991 $\pm$ 0.005 &	0.753 $\pm$ 0.022 &	0.829 $\pm$ 0.049 &	0.979 $\pm$ 0.01 &	0.862 $\pm$ 0.023 &	0.795 $\pm$ 0.025 &	{\bf 0.988 $\pm$ 0.005 } \\
Heart &	0.808 $\pm$ 0.038 &	0.677 $\pm$ 0.042 &	0.719 $\pm$ 0.056 &	0.758 $\pm$ 0.037 &	{\bf 0.784 $\pm$ 0.044 } &	0.731 $\pm$ 0.062 &	0.713 $\pm$ 0.09 \\
Ionosphere &	0.82 $\pm$ 0.022 &	0.624 $\pm$ 0.031 &	0.696 $\pm$ 0.059 &	0.741 $\pm$ 0.043 &	{\bf 0.824 $\pm$ 0.03 } &	0.774 $\pm$ 0.03 &	0.721 $\pm$ 0.053 \\
Sonar &	0.782 $\pm$ 0.05 &	0.551 $\pm$ 0.049 &	0.641 $\pm$ 0.034 &	0.715 $\pm$ 0.044 &	{\bf 0.746 $\pm$ 0.054 } &	0.665 $\pm$ 0.054 &	0.624 $\pm$ 0.064 \\
Haberman &	0.511 $\pm$ 0.019 &	0.503 $\pm$ 0.009 &	0.514 $\pm$ 0.018 &	0.547 $\pm$ 0.027 &	0.523 $\pm$ 0.03 &	0.5 $\pm$ 0.019 &	{\bf 0.551 $\pm$ 0.035 } \\
Segment &	0.985 $\pm$ 0.006 &	0.71 $\pm$ 0.02 &	0.819 $\pm$ 0.033 &	0.955 $\pm$ 0.019 &	0.914 $\pm$ 0.027 &	0.899 $\pm$ 0.02 &	{\bf 0.97 $\pm$ 0.013 } \\
Waveform &	0.837 $\pm$ 0.016 &	0.65 $\pm$ 0.014 &	0.727 $\pm$ 0.019 &	0.842 $\pm$ 0.013 &	0.763 $\pm$ 0.018 &	0.782 $\pm$ 0.018 &	{\bf 0.845 $\pm$ 0.019 } \\
Yeast &	0.518 $\pm$ 0.01 &	0.5 $\pm$ 0.001 &	0.516 $\pm$ 0.007 &	0.621 $\pm$ 0.015 &	{\bf 0.662 $\pm$ 0.024 } &	0.624 $\pm$ 0.018 &	0.66 $\pm$ 0.019 \\
Musk &	0.753 $\pm$ 0.014 &	0.63 $\pm$ 0.017 &	0.672 $\pm$ 0.029 &	{\bf 0.847 $\pm$ 0.014 } &	0.741 $\pm$ 0.02 &	0.78 $\pm$ 0.017 &	0.837 $\pm$ 0.019 \\
Isolet &	0.625 $\pm$ 0.03 &	0.532 $\pm$ 0.01 &	0.593 $\pm$ 0.019 &	{\bf 0.896 $\pm$ 0.02 } &	0.685 $\pm$ 0.025 &	0.843 $\pm$ 0.033 &	0.88 $\pm$ 0.022 \\
Semeion &	0.804 $\pm$ 0.031 &	0.649 $\pm$ 0.044 &	0.712 $\pm$ 0.065 &	0.807 $\pm$ 0.035 &	0.798 $\pm$ 0.028 &	0.81 $\pm$ 0.03 &	{\bf 0.829 $\pm$ 0.036 } \\
Vehicle &	0.937 $\pm$ 0.017 &	0.675 $\pm$ 0.027 &	0.76 $\pm$ 0.038 &	0.875 $\pm$ 0.034 &	0.839 $\pm$ 0.052 &	0.776 $\pm$ 0.053 &	{\bf 0.88 $\pm$ 0.053 } \\
CIFAR10 &	0.893 $\pm$ 0.0 &	0.738 $\pm$ 0.009 &	0.782 $\pm$ 0.015 &	0.845 $\pm$ 0.003 &	0.824 $\pm$ 0.004 &	0.797 $\pm$ 0.008 &	{\bf 0.883 $\pm$ 0.005 } \\
MNIST &	0.851 $\pm$ 0.0 &	0.694 $\pm$ 0.003 &	0.738 $\pm$ 0.015 &	{\bf 0.801 $\pm$ 0.009 } &	0.797 $\pm$ 0.002 &	0.754 $\pm$ 0.004 &	0.791 $\pm$ 0.024 \\
USPS &	0.884 $\pm$ 0.0 &	0.717 $\pm$ 0.004 &	0.745 $\pm$ 0.019 &	{\bf 0.829 $\pm$ 0.008 } &	0.811 $\pm$ 0.004 &	0.748 $\pm$ 0.004 &	0.786 $\pm$ 0.022 \\
Fashion &	0.945 $\pm$ 0.0 &	0.741 $\pm$ 0.004 &	0.764 $\pm$ 0.012 &	0.889 $\pm$ 0.007 &	0.836 $\pm$ 0.004 &	0.777 $\pm$ 0.004 &	{\bf 0.916 $\pm$ 0.007 } \\
\midrule
Avg. rank &	0.0 &	1.05 &	2.2 &	4.95 &	4.4 &	3.4 &	5.0 \\
\bottomrule
\end{tabular}
}
\end{table}

\begin{table}[h!]
\centering
{\tiny
\caption{Balanced accuracy for datasets, for labelling strategy {\bf S3} and $c=0.7$.}
\label{tab:S3c07}
\begin{tabular}{l|l|llllll}
\toprule 
Dataset &	\textsc{ORACLE} &	\textsc{NAIVE} &	\textsc{SAR-EM}  &	\textsc{LBE}  & \textsc{PGLIN}  & \textsc{TM} & \textsc{JERM} \\
\midrule
Breast-w &	0.952 $\pm$ 0.017 &	0.844 $\pm$ 0.024 &	0.887 $\pm$ 0.016 &	0.912 $\pm$ 0.029 &	0.919 $\pm$ 0.023 &	0.876 $\pm$ 0.028 &	{\bf 0.942 $\pm$ 0.019 } \\
Diabetes &	0.716 $\pm$ 0.026 &	0.62 $\pm$ 0.026 &	0.696 $\pm$ 0.032 &	0.656 $\pm$ 0.018 &	{\bf 0.754 $\pm$ 0.02 } &	0.708 $\pm$ 0.033 &	0.692 $\pm$ 0.024 \\
Spambase &	0.892 $\pm$ 0.01 &	0.794 $\pm$ 0.015 &	0.818 $\pm$ 0.015 &	0.844 $\pm$ 0.115 &	{\bf 0.872 $\pm$ 0.013 } &	0.841 $\pm$ 0.016 &	0.842 $\pm$ 0.095 \\
Wdbc &	0.971 $\pm$ 0.01 &	0.86 $\pm$ 0.028 &	0.875 $\pm$ 0.026 &	{\bf 0.909 $\pm$ 0.037 } &	0.906 $\pm$ 0.029 &	0.867 $\pm$ 0.021 &	0.875 $\pm$ 0.043 \\
Banknote &	0.991 $\pm$ 0.005 &	0.887 $\pm$ 0.03 &	0.906 $\pm$ 0.023 &	0.971 $\pm$ 0.02 &	0.954 $\pm$ 0.013 &	0.896 $\pm$ 0.021 &	{\bf 0.982 $\pm$ 0.005 } \\
Heart &	0.808 $\pm$ 0.038 &	0.771 $\pm$ 0.035 &	0.789 $\pm$ 0.042 &	0.749 $\pm$ 0.056 &	{\bf 0.805 $\pm$ 0.05 } &	0.778 $\pm$ 0.031 &	0.768 $\pm$ 0.079 \\
Ionosphere &	0.82 $\pm$ 0.022 &	0.802 $\pm$ 0.027 &	0.791 $\pm$ 0.036 &	0.748 $\pm$ 0.037 &	0.821 $\pm$ 0.02 &	{\bf 0.826 $\pm$ 0.035 } &	0.776 $\pm$ 0.032 \\
Sonar &	0.782 $\pm$ 0.05 &	0.648 $\pm$ 0.073 &	0.711 $\pm$ 0.055 &	{\bf 0.743 $\pm$ 0.056 } &	0.681 $\pm$ 0.057 &	0.712 $\pm$ 0.047 &	0.682 $\pm$ 0.065 \\
Haberman &	0.511 $\pm$ 0.019 &	0.507 $\pm$ 0.013 &	0.527 $\pm$ 0.027 &	0.537 $\pm$ 0.041 &	0.527 $\pm$ 0.038 &	0.506 $\pm$ 0.031 &	{\bf 0.542 $\pm$ 0.039 } \\
Segment &	0.985 $\pm$ 0.006 &	0.886 $\pm$ 0.048 &	0.916 $\pm$ 0.026 &	0.965 $\pm$ 0.019 &	0.972 $\pm$ 0.009 &	0.947 $\pm$ 0.02 &	{\bf 0.978 $\pm$ 0.008 } \\
Waveform &	0.837 $\pm$ 0.016 &	0.729 $\pm$ 0.019 &	0.782 $\pm$ 0.02 &	{\bf 0.852 $\pm$ 0.008 } &	0.839 $\pm$ 0.015 &	0.818 $\pm$ 0.021 &	0.824 $\pm$ 0.016 \\
Yeast &	0.518 $\pm$ 0.01 &	0.505 $\pm$ 0.005 &	0.544 $\pm$ 0.016 &	0.629 $\pm$ 0.015 &	0.644 $\pm$ 0.032 &	0.637 $\pm$ 0.013 &	{\bf 0.656 $\pm$ 0.011 } \\
Musk &	0.753 $\pm$ 0.014 &	0.691 $\pm$ 0.023 &	0.725 $\pm$ 0.023 &	0.842 $\pm$ 0.011 &	0.787 $\pm$ 0.016 &	0.789 $\pm$ 0.018 &	{\bf 0.856 $\pm$ 0.01 } \\
Isolet &	0.625 $\pm$ 0.03 &	0.576 $\pm$ 0.021 &	0.641 $\pm$ 0.029 &	{\bf 0.909 $\pm$ 0.015 } &	0.733 $\pm$ 0.018 &	0.834 $\pm$ 0.024 &	0.899 $\pm$ 0.011 \\
Semeion &	0.804 $\pm$ 0.031 &	0.772 $\pm$ 0.046 &	0.797 $\pm$ 0.039 &	0.859 $\pm$ 0.035 &	0.82 $\pm$ 0.025 &	0.817 $\pm$ 0.027 &	{\bf 0.873 $\pm$ 0.024 } \\
Vehicle &	0.937 $\pm$ 0.017 &	0.798 $\pm$ 0.044 &	0.881 $\pm$ 0.031 &	{\bf 0.934 $\pm$ 0.017 } &	0.925 $\pm$ 0.023 &	0.853 $\pm$ 0.044 &	0.902 $\pm$ 0.031 \\
CIFAR10 &	0.893 $\pm$ 0.0 &	0.825 $\pm$ 0.003 &	0.846 $\pm$ 0.005 &	0.855 $\pm$ 0.015 &	{\bf 0.882 $\pm$ 0.001 } &	0.853 $\pm$ 0.005 &	0.873 $\pm$ 0.006 \\
MNIST &	0.851 $\pm$ 0.0 &	0.79 $\pm$ 0.002 &	0.81 $\pm$ 0.005 &	0.796 $\pm$ 0.011 &	{\bf 0.846 $\pm$ 0.002 } &	0.813 $\pm$ 0.002 &	0.797 $\pm$ 0.024 \\
USPS &	0.884 $\pm$ 0.0 &	0.812 $\pm$ 0.005 &	0.83 $\pm$ 0.007 &	0.834 $\pm$ 0.008 &	{\bf 0.864 $\pm$ 0.002 } &	0.826 $\pm$ 0.005 &	0.816 $\pm$ 0.016 \\
Fashion &	0.945 $\pm$ 0.0 &	0.849 $\pm$ 0.003 &	0.869 $\pm$ 0.008 &	0.916 $\pm$ 0.004 &	0.913 $\pm$ 0.002 &	0.868 $\pm$ 0.002 &	{\bf 0.929 $\pm$ 0.012 } \\
\midrule
Avg. rank &	0.0 &	1.3 &	2.9 &	4.3 &	4.75 &	3.3 &	4.45 \\
\bottomrule
\end{tabular}
}
\end{table}

\clearpage

\subsection{Results for labeling strategy S4}

\begin{table}[h!]
\centering
{\tiny
\caption{balanced accuracy for datasets, for labelling strategy {\bf S4} and $c=0.3$.}
\label{tab:S4c03}
\begin{tabular}{l|l|llllll}
\toprule 
Dataset &	\textsc{ORACLE} &	\textsc{NAIVE} &	\textsc{SAR-EM}  &	\textsc{LBE}  & \textsc{PGLIN}  & \textsc{TM} & \textsc{JERM} \\
\midrule
Breast-w &	0.952 $\pm$ 0.017 &	0.599 $\pm$ 0.016 &	0.638 $\pm$ 0.033 &	0.75 $\pm$ 0.038 &	0.723 $\pm$ 0.037 &	0.659 $\pm$ 0.043 &	{\bf 0.783 $\pm$ 0.031 } \\
Diabetes &	0.716 $\pm$ 0.026 &	0.511 $\pm$ 0.009 &	0.566 $\pm$ 0.009 &	0.722 $\pm$ 0.027 &	0.683 $\pm$ 0.041 &	0.639 $\pm$ 0.029 &	{\bf 0.726 $\pm$ 0.023 } \\
Spambase &	0.893 $\pm$ 0.01 &	0.64 $\pm$ 0.011 &	0.645 $\pm$ 0.011 &	0.677 $\pm$ 0.06 &	0.67 $\pm$ 0.013 &	0.658 $\pm$ 0.014 &	{\bf 0.763 $\pm$ 0.029 } \\
Wdbc &	0.971 $\pm$ 0.01 &	0.648 $\pm$ 0.032 &	0.667 $\pm$ 0.04 &	0.678 $\pm$ 0.044 &	0.692 $\pm$ 0.039 &	0.67 $\pm$ 0.034 &	{\bf 0.749 $\pm$ 0.038 } \\
Banknote &	0.991 $\pm$ 0.005 &	0.64 $\pm$ 0.017 &	0.665 $\pm$ 0.024 &	0.767 $\pm$ 0.025 &	0.713 $\pm$ 0.02 &	0.676 $\pm$ 0.02 &	{\bf 0.961 $\pm$ 0.017 } \\
Heart &	0.808 $\pm$ 0.038 &	0.599 $\pm$ 0.037 &	0.613 $\pm$ 0.033 &	0.705 $\pm$ 0.058 &	{\bf 0.721 $\pm$ 0.056 } &	0.651 $\pm$ 0.05 &	0.679 $\pm$ 0.031 \\
Ionosphere &	0.82 $\pm$ 0.022 &	0.523 $\pm$ 0.029 &	0.547 $\pm$ 0.036 &	{\bf 0.756 $\pm$ 0.052 } &	0.743 $\pm$ 0.069 &	0.657 $\pm$ 0.058 &	0.697 $\pm$ 0.071 \\
Sonar &	0.782 $\pm$ 0.049 &	0.524 $\pm$ 0.031 &	0.56 $\pm$ 0.035 &	0.675 $\pm$ 0.057 &	{\bf 0.716 $\pm$ 0.07 } &	0.595 $\pm$ 0.074 &	0.597 $\pm$ 0.057 \\
Haberman &	0.511 $\pm$ 0.019 &	0.5 $\pm$ 0.0 &	0.512 $\pm$ 0.018 &	0.569 $\pm$ 0.072 &	0.534 $\pm$ 0.057 &	0.505 $\pm$ 0.023 &	{\bf 0.615 $\pm$ 0.068 } \\
Segment &	0.985 $\pm$ 0.006 &	0.579 $\pm$ 0.023 &	0.646 $\pm$ 0.033 &	0.842 $\pm$ 0.025 &	0.758 $\pm$ 0.039 &	0.812 $\pm$ 0.034 &	{\bf 0.88 $\pm$ 0.029 } \\
Waveform &	0.84 $\pm$ 0.015 &	0.582 $\pm$ 0.012 &	0.623 $\pm$ 0.015 &	0.776 $\pm$ 0.018 &	0.651 $\pm$ 0.014 &	0.719 $\pm$ 0.023 &	{\bf 0.822 $\pm$ 0.035 } \\
Yeast &	0.518 $\pm$ 0.01 &	0.5 $\pm$ 0.0 &	0.507 $\pm$ 0.006 &	0.662 $\pm$ 0.014 &	0.66 $\pm$ 0.041 &	0.628 $\pm$ 0.011 &	{\bf 0.693 $\pm$ 0.022 } \\
Musk &	0.754 $\pm$ 0.014 &	0.585 $\pm$ 0.012 &	0.628 $\pm$ 0.018 &	{\bf 0.778 $\pm$ 0.02 } &	0.652 $\pm$ 0.022 &	0.689 $\pm$ 0.021 &	0.742 $\pm$ 0.02 \\
Isolet &	0.602 $\pm$ 0.026 &	0.525 $\pm$ 0.008 &	0.553 $\pm$ 0.022 &	{\bf 0.803 $\pm$ 0.051 } &	0.6 $\pm$ 0.027 &	0.741 $\pm$ 0.029 &	0.756 $\pm$ 0.032 \\
Semeion &	0.786 $\pm$ 0.027 &	0.605 $\pm$ 0.067 &	0.667 $\pm$ 0.063 &	0.726 $\pm$ 0.037 &	0.751 $\pm$ 0.031 &	0.722 $\pm$ 0.041 &	{\bf 0.77 $\pm$ 0.044 } \\
Vehicle &	0.937 $\pm$ 0.017 &	0.598 $\pm$ 0.037 &	0.626 $\pm$ 0.03 &	0.753 $\pm$ 0.049 &	0.699 $\pm$ 0.033 &	0.666 $\pm$ 0.062 &	{\bf 0.799 $\pm$ 0.061 } \\
CIFAR10 &	0.895 $\pm$ 0.0 &	0.645 $\pm$ 0.003 &	0.674 $\pm$ 0.004 &	0.783 $\pm$ 0.01 &	0.701 $\pm$ 0.003 &	0.708 $\pm$ 0.005 &	{\bf 0.786 $\pm$ 0.01 } \\
MNIST &	0.848 $\pm$ 0.0 &	0.621 $\pm$ 0.004 &	0.648 $\pm$ 0.004 &	{\bf 0.778 $\pm$ 0.008 } &	0.677 $\pm$ 0.006 &	0.667 $\pm$ 0.004 &	0.77 $\pm$ 0.008 \\
USPS &	0.882 $\pm$ 0.0 &	0.624 $\pm$ 0.004 &	0.66 $\pm$ 0.003 &	0.769 $\pm$ 0.003 &	0.692 $\pm$ 0.004 &	0.676 $\pm$ 0.003 &	{\bf 0.78 $\pm$ 0.006 } \\
Fashion &	0.949 $\pm$ 0.0 &	0.641 $\pm$ 0.003 &	0.666 $\pm$ 0.006 &	0.779 $\pm$ 0.006 &	0.708 $\pm$ 0.002 &	0.688 $\pm$ 0.003 &	{\bf 0.805 $\pm$ 0.007 } \\
\midrule
Avg. rank &	0.0 &	1.0 &	2.05 &	5.1 &	4.1 &	3.2 &	5.55 \\
\bottomrule
\end{tabular}
}
\end{table}

\begin{table}[h!]
\centering
{\tiny
\caption{balanced accuracy for datasets, for labelling strategy {\bf S4} and $c=0.5$.}
\label{tab:S4c05}
\begin{tabular}{l|l|llllll}
\toprule 
Dataset &	\textsc{ORACLE} &	\textsc{NAIVE} &	\textsc{SAR-EM}  &	\textsc{LBE}  & \textsc{PGLIN}  & \textsc{TM} & \textsc{JERM} \\
\midrule
Breast-w &	0.952 $\pm$ 0.017 &	0.733 $\pm$ 0.029 &	0.758 $\pm$ 0.036 &	0.848 $\pm$ 0.029 &	0.828 $\pm$ 0.028 &	0.766 $\pm$ 0.033 &	{\bf 0.859 $\pm$ 0.019 } \\
Diabetes &	0.716 $\pm$ 0.026 &	0.579 $\pm$ 0.017 &	0.637 $\pm$ 0.03 &	0.679 $\pm$ 0.039 &	{\bf 0.722 $\pm$ 0.031 } &	0.689 $\pm$ 0.034 &	0.714 $\pm$ 0.031 \\
Spambase &	0.893 $\pm$ 0.01 &	0.732 $\pm$ 0.014 &	0.743 $\pm$ 0.013 &	{\bf 0.816 $\pm$ 0.016 } &	0.77 $\pm$ 0.015 &	0.746 $\pm$ 0.015 &	0.799 $\pm$ 0.078 \\
Wdbc &	0.971 $\pm$ 0.01 &	0.76 $\pm$ 0.04 &	0.787 $\pm$ 0.046 &	0.806 $\pm$ 0.039 &	0.791 $\pm$ 0.043 &	0.762 $\pm$ 0.043 &	{\bf 0.852 $\pm$ 0.028 } \\
Banknote &	0.991 $\pm$ 0.005 &	0.75 $\pm$ 0.023 &	0.784 $\pm$ 0.021 &	0.86 $\pm$ 0.031 &	0.815 $\pm$ 0.019 &	0.768 $\pm$ 0.021 &	{\bf 0.986 $\pm$ 0.008 } \\
Heart &	0.808 $\pm$ 0.038 &	0.7 $\pm$ 0.042 &	0.712 $\pm$ 0.048 &	0.76 $\pm$ 0.049 &	{\bf 0.776 $\pm$ 0.028 } &	0.707 $\pm$ 0.061 &	0.727 $\pm$ 0.034 \\
Ionosphere &	0.82 $\pm$ 0.022 &	0.625 $\pm$ 0.047 &	0.663 $\pm$ 0.057 &	0.769 $\pm$ 0.056 &	{\bf 0.829 $\pm$ 0.024 } &	0.755 $\pm$ 0.037 &	0.738 $\pm$ 0.033 \\
Sonar &	0.782 $\pm$ 0.049 &	0.581 $\pm$ 0.059 &	0.649 $\pm$ 0.057 &	0.698 $\pm$ 0.041 &	{\bf 0.738 $\pm$ 0.058 } &	0.671 $\pm$ 0.059 &	0.649 $\pm$ 0.048 \\
Haberman &	0.511 $\pm$ 0.019 &	0.499 $\pm$ 0.003 &	0.509 $\pm$ 0.021 &	0.542 $\pm$ 0.027 &	0.56 $\pm$ 0.064 &	0.515 $\pm$ 0.024 &	{\bf 0.596 $\pm$ 0.061 } \\
Segment &	0.985 $\pm$ 0.006 &	0.707 $\pm$ 0.032 &	0.781 $\pm$ 0.033 &	0.903 $\pm$ 0.035 &	0.872 $\pm$ 0.031 &	0.88 $\pm$ 0.024 &	{\bf 0.964 $\pm$ 0.014 } \\
Waveform &	0.84 $\pm$ 0.015 &	0.658 $\pm$ 0.014 &	0.712 $\pm$ 0.015 &	0.835 $\pm$ 0.011 &	0.753 $\pm$ 0.013 &	0.767 $\pm$ 0.02 &	{\bf 0.854 $\pm$ 0.014 } \\
Yeast &	0.518 $\pm$ 0.01 &	0.5 $\pm$ 0.001 &	0.512 $\pm$ 0.008 &	0.637 $\pm$ 0.014 &	0.672 $\pm$ 0.026 &	0.632 $\pm$ 0.018 &	{\bf 0.681 $\pm$ 0.029 } \\
Musk &	0.754 $\pm$ 0.014 &	0.664 $\pm$ 0.024 &	0.693 $\pm$ 0.023 &	{\bf 0.842 $\pm$ 0.018 } &	0.746 $\pm$ 0.017 &	0.752 $\pm$ 0.02 &	0.839 $\pm$ 0.017 \\
Isolet &	0.602 $\pm$ 0.026 &	0.547 $\pm$ 0.017 &	0.578 $\pm$ 0.026 &	{\bf 0.864 $\pm$ 0.024 } &	0.645 $\pm$ 0.025 &	0.8 $\pm$ 0.044 &	0.847 $\pm$ 0.03 \\
Semeion &	0.786 $\pm$ 0.027 &	0.694 $\pm$ 0.033 &	0.723 $\pm$ 0.036 &	0.785 $\pm$ 0.04 &	0.773 $\pm$ 0.03 &	0.756 $\pm$ 0.039 &	{\bf 0.832 $\pm$ 0.035 } \\
Vehicle &	0.937 $\pm$ 0.017 &	0.697 $\pm$ 0.048 &	0.75 $\pm$ 0.056 &	0.86 $\pm$ 0.052 &	0.813 $\pm$ 0.047 &	0.764 $\pm$ 0.054 &	{\bf 0.895 $\pm$ 0.034 } \\
CIFAR10 &	0.895 $\pm$ 0.0 &	0.738 $\pm$ 0.002 &	0.771 $\pm$ 0.007 &	0.844 $\pm$ 0.005 &	0.796 $\pm$ 0.004 &	0.776 $\pm$ 0.003 &	{\bf 0.871 $\pm$ 0.004 } \\
MNIST &	0.848 $\pm$ 0.0 &	0.704 $\pm$ 0.004 &	0.742 $\pm$ 0.009 &	{\bf 0.827 $\pm$ 0.007 } &	0.786 $\pm$ 0.003 &	0.742 $\pm$ 0.002 &	0.822 $\pm$ 0.009 \\
USPS &	0.882 $\pm$ 0.0 &	0.724 $\pm$ 0.005 &	0.738 $\pm$ 0.011 &	0.833 $\pm$ 0.009 &	0.797 $\pm$ 0.003 &	0.737 $\pm$ 0.004 &	{\bf 0.84 $\pm$ 0.01 } \\
Fashion &	0.949 $\pm$ 0.0 &	0.749 $\pm$ 0.005 &	0.771 $\pm$ 0.007 &	0.862 $\pm$ 0.006 &	0.809 $\pm$ 0.004 &	0.764 $\pm$ 0.002 &	{\bf 0.878 $\pm$ 0.012 } \\
\midrule
Avg. rank &	0.0 &	1.0 &	2.25 &	5.0 &	4.3 &	3.1 &	5.35 \\
\bottomrule
\end{tabular}
}
\end{table}

\begin{table}[h!]
\centering
{\tiny
\caption{balanced accuracy for datasets, for labelling strategy {\bf S4} and $c=0.7$.}
\label{tab:S4c07}
\begin{tabular}{l|l|llllll}
\toprule 
Dataset &	\textsc{ORACLE} &	\textsc{NAIVE} &	\textsc{SAR-EM}  &	\textsc{LBE}  & \textsc{PGLIN}  & \textsc{TM} & \textsc{JERM} \\
\midrule
Breast-w &	0.952 $\pm$ 0.017 &	0.852 $\pm$ 0.02 &	0.877 $\pm$ 0.022 &	0.907 $\pm$ 0.023 &	0.906 $\pm$ 0.016 &	0.865 $\pm$ 0.028 &	{\bf 0.917 $\pm$ 0.021 } \\
Diabetes &	0.716 $\pm$ 0.026 &	0.632 $\pm$ 0.027 &	0.698 $\pm$ 0.032 &	0.653 $\pm$ 0.031 &	{\bf 0.753 $\pm$ 0.026 } &	0.703 $\pm$ 0.031 &	0.691 $\pm$ 0.028 \\
Spambase &	0.893 $\pm$ 0.01 &	0.811 $\pm$ 0.016 &	0.822 $\pm$ 0.014 &	0.858 $\pm$ 0.12 &	{\bf 0.864 $\pm$ 0.014 } &	0.828 $\pm$ 0.017 &	0.799 $\pm$ 0.142 \\
Wdbc &	0.971 $\pm$ 0.01 &	0.865 $\pm$ 0.035 &	0.883 $\pm$ 0.029 &	0.886 $\pm$ 0.036 &	0.903 $\pm$ 0.025 &	0.865 $\pm$ 0.035 &	{\bf 0.904 $\pm$ 0.023 } \\
Banknote &	0.991 $\pm$ 0.005 &	0.871 $\pm$ 0.024 &	0.893 $\pm$ 0.023 &	0.938 $\pm$ 0.019 &	0.924 $\pm$ 0.02 &	0.876 $\pm$ 0.025 &	{\bf 0.98 $\pm$ 0.007 } \\
Heart &	0.808 $\pm$ 0.038 &	0.75 $\pm$ 0.035 &	0.795 $\pm$ 0.038 &	0.789 $\pm$ 0.048 &	{\bf 0.817 $\pm$ 0.029 } &	0.774 $\pm$ 0.035 &	0.77 $\pm$ 0.048 \\
Ionosphere &	0.82 $\pm$ 0.022 &	0.784 $\pm$ 0.034 &	0.817 $\pm$ 0.031 &	0.772 $\pm$ 0.034 &	{\bf 0.826 $\pm$ 0.036 } &	0.814 $\pm$ 0.046 &	0.776 $\pm$ 0.04 \\
Sonar &	0.782 $\pm$ 0.049 &	0.657 $\pm$ 0.065 &	0.739 $\pm$ 0.061 &	{\bf 0.749 $\pm$ 0.052 } &	0.705 $\pm$ 0.029 &	0.706 $\pm$ 0.031 &	0.675 $\pm$ 0.092 \\
Haberman &	0.511 $\pm$ 0.019 &	0.499 $\pm$ 0.003 &	0.527 $\pm$ 0.02 &	0.536 $\pm$ 0.02 &	0.538 $\pm$ 0.063 &	0.531 $\pm$ 0.044 &	{\bf 0.55 $\pm$ 0.027 } \\
Segment &	0.985 $\pm$ 0.006 &	0.885 $\pm$ 0.021 &	0.906 $\pm$ 0.024 &	0.947 $\pm$ 0.019 &	0.962 $\pm$ 0.016 &	0.932 $\pm$ 0.023 &	{\bf 0.98 $\pm$ 0.006 } \\
Waveform &	0.84 $\pm$ 0.015 &	0.735 $\pm$ 0.014 &	0.779 $\pm$ 0.014 &	{\bf 0.854 $\pm$ 0.007 } &	0.828 $\pm$ 0.012 &	0.803 $\pm$ 0.015 &	0.837 $\pm$ 0.009 \\
Yeast &	0.518 $\pm$ 0.01 &	0.506 $\pm$ 0.005 &	0.549 $\pm$ 0.018 &	0.628 $\pm$ 0.015 &	0.637 $\pm$ 0.024 &	0.637 $\pm$ 0.016 &	{\bf 0.662 $\pm$ 0.018 } \\
Musk &	0.754 $\pm$ 0.014 &	0.712 $\pm$ 0.015 &	0.746 $\pm$ 0.02 &	{\bf 0.869 $\pm$ 0.01 } &	0.788 $\pm$ 0.016 &	0.785 $\pm$ 0.017 &	0.865 $\pm$ 0.01 \\
Isolet &	0.602 $\pm$ 0.026 &	0.572 $\pm$ 0.018 &	0.629 $\pm$ 0.019 &	{\bf 0.899 $\pm$ 0.017 } &	0.696 $\pm$ 0.033 &	0.81 $\pm$ 0.035 &	0.882 $\pm$ 0.01 \\
Semeion &	0.786 $\pm$ 0.027 &	0.74 $\pm$ 0.039 &	0.771 $\pm$ 0.039 &	0.815 $\pm$ 0.033 &	0.801 $\pm$ 0.027 &	0.797 $\pm$ 0.031 &	{\bf 0.877 $\pm$ 0.029 } \\
Vehicle &	0.937 $\pm$ 0.017 &	0.807 $\pm$ 0.043 &	0.871 $\pm$ 0.034 &	0.917 $\pm$ 0.03 &	{\bf 0.92 $\pm$ 0.025 } &	0.872 $\pm$ 0.022 &	0.91 $\pm$ 0.024 \\
CIFAR10 &	0.895 $\pm$ 0.0 &	0.827 $\pm$ 0.001 &	0.845 $\pm$ 0.004 &	0.881 $\pm$ 0.004 &	0.869 $\pm$ 0.002 &	0.839 $\pm$ 0.002 &	{\bf 0.881 $\pm$ 0.017 } \\
MNIST &	0.848 $\pm$ 0.0 &	0.792 $\pm$ 0.003 &	0.821 $\pm$ 0.003 &	0.813 $\pm$ 0.005 &	{\bf 0.84 $\pm$ 0.002 } &	0.81 $\pm$ 0.002 &	0.803 $\pm$ 0.013 \\
USPS &	0.882 $\pm$ 0.0 &	0.817 $\pm$ 0.003 &	0.824 $\pm$ 0.002 &	0.852 $\pm$ 0.01 &	0.856 $\pm$ 0.001 &	0.818 $\pm$ 0.005 &	{\bf 0.868 $\pm$ 0.014 } \\
Fashion &	0.949 $\pm$ 0.0 &	0.845 $\pm$ 0.003 &	0.856 $\pm$ 0.005 &	0.914 $\pm$ 0.004 &	0.895 $\pm$ 0.002 &	0.85 $\pm$ 0.004 &	{\bf 0.929 $\pm$ 0.004 } \\
\midrule
Avg. rank &	0.0 &	1.15 &	3.05 &	4.45 &	4.75 &	3.05 &	4.55 \\
\bottomrule
\end{tabular}
}
\end{table}

\clearpage

\section{}
\label{App2}
\setcounter{lemma}{0}
\begin{lemma}
\label{maurer}
Let $z_1,\ldots,z_n$ be fixed vectors from $\mathcal{Z}.$ Besides, let $\mathcal{F}$ be a family of $K$-dimensional functions on $\mathcal{Z}.$ We also consider Lipschitz functions $h_i:\mathbb{R}^K \rightarrow \R, i=1,\ldots,n$ with a Lipschitz constant $L>0.$ If there exists $\bar f \in \mathcal{F}$ such that $h_i(\bar f(z_i))=0$ for $i=1,\ldots,n,$ then 
\begin{equation}
\label{Maurer_bound}
\Ex \sup_{f \in \mathcal{F}} \left|\sum_{i=1}^n \varepsilon_i h_i(f(z_i))\right| \leq 2 \sqrt{2} L  \Ex \sup_{f \in \mathcal{F}} \sum_{i=1}^n \sum_{k=1}^K \varepsilon_{i,k} f_k(z_i),
\end{equation}
where $f=(f_1,f_2,\ldots,f_K)$ for each $f \in \mathcal{F}$ and $\{\varepsilon_i\}_i, \{\varepsilon_{i,k}\}_{i,k}$ are independent Rademacher sequences. 
\end{lemma}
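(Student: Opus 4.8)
The plan is to reduce the statement to Maurer's vector-contraction inequality \cite{Maurer2016}, the anchor condition $h_i(\bar f(z_i))=0$ serving only to upgrade a one-sided supremum to a supremum of an absolute value. Throughout, I would condition on a realization of the Rademacher sequence $\{\varepsilon_i\}_i$ and abbreviate $G(f):=\sum_{i=1}^n \varepsilon_i h_i(f(z_i))$ for $f\in\mathcal{F}$.

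First I would note that, by hypothesis, $G(\bar f)=\sum_{i=1}^n\varepsilon_i h_i(\bar f(z_i))=0$; since $\bar f\in\mathcal{F}$, this forces $\sup_{f\in\mathcal{F}}G(f)\ge 0$ and $\sup_{f\in\mathcal{F}}(-G(f))\ge 0$. Using the identity $\sup_{f}|G(f)|=\max\bigl(\sup_{f}G(f),\,\sup_{f}(-G(f))\bigr)$ together with $\max(a,b)\le a+b$ for $a,b\ge 0$, one obtains the pointwise bound $\sup_{f}|G(f)|\le \sup_{f}G(f)+\sup_{f}(-G(f))$. Taking expectations over $\{\varepsilon_i\}_i$ and using the symmetry of the Rademacher sequence (so that $\{-\varepsilon_i\}_i$ has the same law as $\{\varepsilon_i\}_i$), the two resulting terms have equal expectation, whence
\[
\Ex \sup_{f\in\mathcal{F}}\Bigl|\sum_{i=1}^n \varepsilon_i h_i(f(z_i))\Bigr|\;\le\; 2\,\Ex \sup_{f\in\mathcal{F}}\sum_{i=1}^n \varepsilon_i h_i(f(z_i)).
\]

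Next I would apply the multivariate contraction inequality of \cite{Maurer2016} to the one-sided quantity on the right-hand side: the functions $h_i$ are $L$-Lipschitz on $\mathbb{R}^K$ and $\mathcal{F}$ consists of $K$-dimensional maps on $\mathcal{Z}$, so that result yields
\[
\Ex \sup_{f\in\mathcal{F}}\sum_{i=1}^n \varepsilon_i h_i(f(z_i))\;\le\;\sqrt{2}\,L\,\Ex \sup_{f\in\mathcal{F}}\sum_{i=1}^n\sum_{k=1}^K \varepsilon_{i,k} f_k(z_i),
\]
with $\{\varepsilon_{i,k}\}_{i,k}$ an independent doubly-indexed Rademacher array. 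Combining the last two displays gives \eqref{Maurer_bound} with constant $2\sqrt{2}\,L$.

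The only step that requires care is the first reduction, which genuinely uses $\bar f\in\mathcal{F}$ and not merely the existence of \emph{some} map vanishing at the points $z_i$: only then are both one-sided suprema nonnegative, so that the factor-$2$ passage is valid. Past that, the argument is a direct invocation of the cited contraction principle and needs no further computation. (If one wished to avoid citing \cite{Maurer2016} entirely, the substantive work would instead be to re-derive the multivariate contraction inequality itself---e.g.\ via a coordinatewise Gaussian-comparison argument---which is considerably more involved; here it is taken as given, exactly as in the statement.)
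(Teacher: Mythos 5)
Your proposal is correct and follows essentially the same route as the paper's own proof: split the absolute value into two one-sided suprema, use Rademacher symmetry to merge them into a factor $2$, use the anchor $h_i(\bar f(z_i))=0$ with $\bar f\in\mathcal{F}$ to guarantee the one-sided supremum is nonnegative, and then invoke Corollary 1 of Maurer (2016) to get the $\sqrt{2}L$ contraction. The only difference is cosmetic bookkeeping (you bound $\max(a,b)\le a+b$ for nonnegative one-sided suprema, while the paper writes $|t|=\max(t,0)+\max(-t,0)$ before taking suprema), which changes nothing of substance.
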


\begin{proof}
The key step in the proof is Corollary 1 in \cite{Maurer2016}, which does not require the assumption on existence of $\bar f$  to establish analogue of the lemma above
without absolute values on the left-hand side of \eqref{Maurer_bound}. We show how to strengthen Corollary 1 in \cite{Maurer2016} to \eqref{Maurer_bound}. The price for this improvement will be the additional constant  2   on the right-hand side of the inequality.

For all real $t$ we have $|t| = \max(t,0)+ \max(-t,0).$ Therefore, we obtain
\begin{eqnarray}
\label{Maur1}
&\,&\Ex \sup_{f \in \mathcal{F}} \left|\sum_{i=1}^n \varepsilon_i h_i(f(z_i))\right|=
\Ex \sup_{f \in \mathcal{F}} \left[\max\left(\sum_{i=1}^n \varepsilon_i h_i(f(z_i)),0\right) +\max\left(- \sum_{i=1}^n \varepsilon_i h_i(f(z_i)),0\right)\right]\nonumber \\
&\leq&\Ex \sup_{f \in \mathcal{F}} \max\left(\sum_{i=1}^n \varepsilon_i h_i(f(z_i)),0\right)+
\Ex \sup_{f \in \mathcal{F}} \max\left(\sum_{i=1}^n (-\varepsilon_i) h_i(f(z_i)),0\right).
\end{eqnarray}
Variables $\varepsilon_i$ and $-\varepsilon_i$ have the same distibution, so  \eqref{Maur1} equals
$$
2 \Ex \sup_{f \in \mathcal{F}} \max\left(\sum_{i=1}^n \varepsilon_i h_i(f(z_i)),0\right)=
2\Ex \max\left( \sup_{f \in \mathcal{F}} \sum_{i=1}^n \varepsilon_i h_i(f(z_i)),0\right),
$$
which is $2\Ex  \sup_{f \in \mathcal{F}} \sum_{i=1}^n \varepsilon_i h_i(f(z_i)),$ because
$$\sup_{f \in \mathcal{F}} \sum_{i=1}^n \varepsilon_i h_i(f(z_i)) \geq \sum_{i=1}^n \varepsilon_i h_i(\bar f(z_i))=0.$$ Finally, we apply Corollary 1 in \cite{Maurer2016}.

\end{proof}

\end{document}